\newcommand{\X}{\mathcal{X}}
\newcommand{\U}{\mathcal{U}}
\newcommand{\Y}{\mathcal{Y}}
\renewcommand{\S}{\mathcal{S}}
\newcommand{\V}{\mathcal{V}}
\newcommand{\B}{\mathcal{B}}
\newcommand{\Xsafe}{\X_{\textrm{safe}}}
\newcommand{\Xunsafe}{\X_{\textrm{unsafe}}}
\newcommand{\T}{\mathcal{T}}
\newcommand{\I}{\mathbf{I}}
\newcommand{\f}{f}
\newcommand{\h}{h}
\newcommand{\gzero}{f}
\newcommand{\gone}{B}
\newcommand{\lmtd}{\textrm{CORRT} }
\newcommand{\lmtds}{\textrm{CORRT}}
\newcommand{\yred}{y_r}
\newcommand{\dx}{w_x}
\newcommand{\dy}{w_y}
\newcommand{\bardy}{\bar{w}_y}
\newcommand{\bardx}{\bar{w}_x}
\newcommand{\dq}{w_q}
\newcommand{\de}{w_e}
\newcommand{\dc}{w_c}
\newcommand{\distc}{d_c}
\newcommand{\diste}{d_e}
\newcommand{\Mc}{M_c}
\newcommand{\Wc}{W_c}
\newcommand{\Mo}{M_e}
\newcommand{\Wo}{W_e}
\newcommand{\Cred}{C_r}
\newcommand{\disp}{\mathcal{R}}
\newcommand{\hinv}{\hat{h}^{-1}}
\newcommand{\err}{\epsilon}
\newcommand{\Ndata}{N_\textrm{data}}
\newcommand{\xhatdot}{\dot{\hat{x}}}
\newcommand{\xhat}{\hat{x}}
\newcommand{\maxeig}[1]{\bar\lambda(#1)}
\newcommand{\mineig}[1]{\underline\lambda(#1)}
\newcommand{\maxeigc}[1]{\bar\lambda_{D_c}(#1)}
\newcommand{\mineigc}[1]{\underline\lambda_{D_c}(#1)}
\newcommand{\maxeige}[1]{\bar\lambda_{D_e}(#1)}
\newcommand{\maxsing}[1]{\bar\sigma(#1)}
\newcommand\reallywidehat[1]{%
\savestack{\tmpbox}{\stretchto{%
  \scaleto{%
    \scalerel*[\widthof{\ensuremath{#1}}]{\kern-.6pt\bigwedge\kern-.6pt}%
    {\rule[-\textheight/2]{1ex}{\textheight}}
  }{\textheight}%
}{0.5ex}}%
\stackon[1pt]{#1}{\tmpbox}%
}
\def\doi#1{\href{https://doi.org/\detokenize{#1}}{\url{https://doi.org/\detokenize{#1}}}}
\begin{document}
\title{Safe Output Feedback Motion Planning from Images via Learned Perception Modules and Contraction Theory}
\titlerunning{Safe Output Feedback Motion Planning from Images}
%
\author{Glen Chou\and
Necmiye Ozay \and
Dmitry Berenson%
\thanks{\footnotesize{This research was supported in part by NSF grants IIS-1750489 and IIS-2113401, ONR grants N00014-21-1-2118, N00014-18-1-2501 and N00014-21-1-2431, and a National Defense Science and Engineering Graduate (NDSEG) fellowship.}}}
\authorrunning{G. Chou et al.}
%
\institute{Dept. of Electrical Engineering and Computer Science, \\ University of Michigan, Ann Arbor, MI 48105, USA\\
\email{\{gchou, necmiye, dmitryb\}@umich.edu}}
\maketitle              
%

\begin{abstract}
We present a motion planning algorithm for a class of uncertain control-affine nonlinear systems which guarantees runtime safety and goal reachability when using high-dimensional sensor measurements (e.g., RGB-D images) and a learned perception module in the feedback control loop. First, given a dataset of states and observations, we train a perception system that seeks to invert a subset of the state from an observation, and estimate an upper bound on the perception error which is valid with high probability in a trusted domain near the data. Next, we use contraction theory to design a stabilizing state feedback controller and a convergent dynamic state observer which uses the learned perception system to update its state estimate. We derive a bound on the trajectory tracking error when this controller is subjected to errors in the dynamics and incorrect state estimates. Finally, we integrate this bound into a sampling-based motion planner, guiding it to return trajectories that can be safely tracked at runtime using sensor data. We demonstrate our approach in simulation on a 4D car, a 6D planar quadrotor, and a 17D manipulation task with RGB(-D) sensor measurements, demonstrating that our method safely and reliably steers the system to the goal, while baselines that fail to consider the trusted domain or state estimation errors can be unsafe.

\keywords{Motion planning  \and machine learning \and perception-based control.}
\end{abstract}

\vspace{-25pt}
\section{Introduction}
\vspace{-22pt}

\begin{figure}
\includegraphics[width=\textwidth]{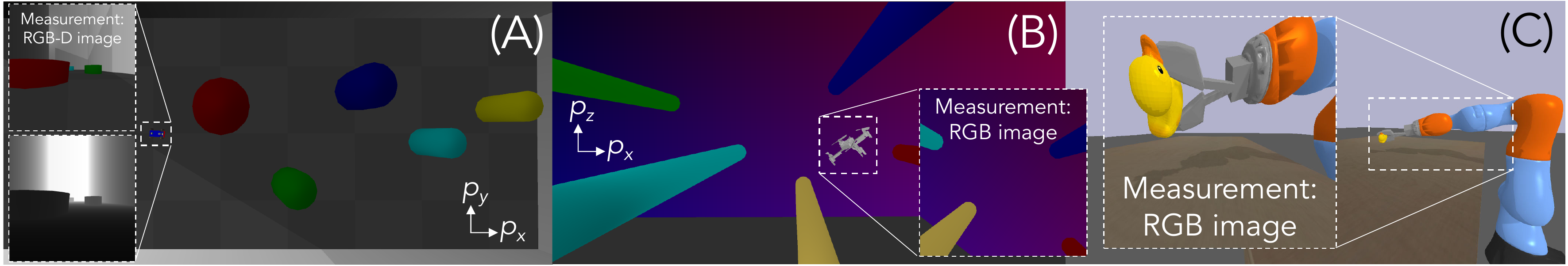}
\vspace{-8pt}
\caption{For a 4D car, a 6D quadrotor, and a 14D arm, we compute plans that can be safely stabilized to reach goals at runtime using rich sensor observations in the form of RGB(-D) images.} \label{fig:environments}
\end{figure}\vspace{-16pt}

Safely and reliably deploying an autonomous robot requires a systematic analysis of the uncertainties that it may face across its perception, planning, and feedback control modules. State-of-the-art methods largely analyze each module separately; e.g., by first certifying perception \cite{DBLP:journals/trob/YangSC21}, finding a safe plan under a nominal dynamics model \cite{lavalle2006planning}, and then using a stable tracking controller \cite{sumeet_icra}. However, this ignores how the errors in each module can propagate. Inaccuracies in the dynamics and perception can destabilize the downstream feedback controller and lead to failure, revealing a need to unify  perception, planning, and control to guarantee safety for the end-to-end autonomy pipeline.

To address this gap, we consider one such unified approach: the Output Feedback Motion Planning problem (OFMP) \cite{DBLP:journals/corr/abs-2002-02928}, which jointly plans nominal trajectories and designs feedback controllers which safely stabilize the system to some goal when using imperfect state information (i.e., output feedback). A concrete way to solve the OFMP is to bound the set of states that the system may reach while tracking a plan using output feedback, that is, a closed-loop output feedback trajectory tracking tube, and ensure it is collision-free. Practical robots present challenges in solving the OFMP:

\begin{enumerate}
	\item The tracking tubes should be efficiently computable for arbitrary trajectories so that they can be used in the planning loop to restrict the set of states that can be safely visited. However, solving this reachability problem is computationally demanding.
	\item Processing rich sensor data (e.g., images, depth maps, etc.) at runtime is often done via deep learning-based perception modules, which are powerful but error-prone. Bounding this error and bounding its effect on trajectory tracking error is difficult.
\end{enumerate}

To address the first challenge, we use contraction theory, which is of specific interest for the OFMP as it enables the 1) design of stabilizing feedback controllers \cite{manchester} and convergent state estimators \cite{DBLP:journals/tac/DaniCH15} and 2) fast computation of tracking/estimation tubes, given a bound on the disturbances that the controller and observer are subjected to \cite{DBLP:conf/aucc/ManchesterS14}. Estimating this bound is central to our solution of the second challenge, where we use data to 1) estimate a bound on the error of a learned perception module which is valid with high probability and 2) bound the level to which incorrect state estimates can destabilize the controller. Combining these solutions provides accurate tubes that can be used in planning. In summary, we develop a contraction-based output feedback motion planning algorithm for control-affine systems stabilized from image observations, which retains guarantees on safety and goal reachability. Our specific contributions are:

\begin{itemize}
	\item A learning-based framework for integrating high-dimensional observations into contraction-based control and estimation that can generalize across environments
	\item A trajectory tracking error bound for contraction-based feedback controllers in output feedback, subjected to a disturbance that accurately reflects the perception error
	\item A sampling-based planner which solves the OFMP, returning plans that can be safely tracked and that reliably reach the goal at runtime using image observations
	\item Validation in simulation on a 4D nonholonomic car, a 6D planar quadrotor, and a 17D manipulation task, guaranteeing safety whereas baseline approaches fail
\end{itemize}

\section{Related Work}

First, our work is related to and draws from contraction-based control and estimation. Contraction-based robust control \cite{sumeet_icra, cdc, DBLP:journals/ral/TsukamotoC21} can ensure safety for uncertain systems using perfect state feedback, but the guarantees are lost if using imperfect state estimates. Other work has studied contraction-based convergence guarantees for state estimation without control input, e.g., \cite{DBLP:journals/csysl/TsukamotoC21,DBLP:journals/tac/BonnabelS15,DBLP:journals/tac/DaniCH15}; however, solving the OFMP requires jointly analyzing the controller and observer. Most closely related is \cite{DBLP:conf/aucc/ManchesterS14}, which studies output feedback control via contracting controllers and observers; however, it considers simple measurement models and does not derive the tracking tubes needed for the OFMP.

Our work also relates to control from rich observations. Differentiable filtering \cite{DBLP:journals/arobots/KlossMB21} learns state estimators from images in an end-to-end fashion, which while empirically successful, do not provide guarantees. Other work focuses on safety: \cite{DBLP:conf/l4dc/DeanMRY20} safely controls linear systems using learned observation maps; other methods use Control Barrier/Lyapunov Functions (CBF/CLFs) to guarantee safety for nonlinear systems by robustifying the CBF condition to measurement errors \cite{dean, DBLP:conf/iros/CosnerSTMBA21}; however, these methods use simple sensor models or require that the entire state is invertible from one observation, precluding their use on states that must be estimated over time, e.g., velocities. In contrast, our method only seeks to invert a \textit{subset} of the state, which is then used in a dynamic observer to estimate the unobserved states. Other work \cite{dawson} combines CLFs and CBFs to safely reach goals from observations, but focuses on simpler LiDAR sensor models.

Finally, our work relates to planning under uncertainty. Funnel-based methods buffer motion primitives with tracking tubes under perfect \cite{MajumdarT17} and vision-based \cite{sushant} feedback control. In contrast, we do not rely on precomputed primitives, and can plan novel trajectories. Other methods \cite{firm, DBLP:conf/iros/BahreinianMT21} consider measurement error in planning but are either restricted to linear systems or simple sensor models. These methods are instances of the generally intractable belief-space planning problem; solving this problem requires simplifying assumptions \cite{sunberg} that may compromise safety. We do not solve the full belief-space planning problem; instead of tracking belief distributions, our set-based approach bounds the reachable states and state estimates under the worst-case error. 

\section{Preliminaries and Problem Statement}

We consider uncertain continuous-time control-affine nonlinear systems (which include many common mechanical systems of interest \cite{lavalle2006planning}) with output observations
\begin{subequations}\label{eq:sys_and_obs}
	\begin{align}
     \dot x(t) &= \f(x(t)) + Bu(t) + B_w(t)\dx(t)\label{eq:system}\\
     y(t) &= \h(x(t), \theta) + B_y\dy(t)\label{eq:observation}
\end{align} 
\end{subequations}
where $\f: \X \rightarrow \X$, $\X \subseteq \mathbb{R}^{n_x}$, $B \in \mathbb{R}^{n_x \times n_u}$, $B_w: [0,\infty)\rightarrow \mathbb{R}^{n_x \times n_{w_x}}$, $B_y \in \mathbb{R}^{n_y \times n_{w_y}}$, $\U \subseteq \mathbb{R}^{n_u}$, and $\dx \in \mathbb{R}^{n_{w_x}}$ is a possibly stochastic state disturbance where $\Vert \dx(t) \Vert \le \bardx$, for all $t$. Without loss of generality, we assume $\Vert B_w(t)\Vert \le 1$, for all $t$. Norms $\|\cdot\|$ without subscript are the (induced) 2-norm. We obtain high-dimensional observations $y \in \Y \subseteq \mathbb{R}^{n_y}$ (e.g., $N\times N$-pixel RGB-D images, leading to $n_y = 4N^2$), generated by a deterministic, nonlinear function $\h(x, \theta): \X \times \Theta \rightarrow \Y$ which is unknown to the robot; here, $\theta \in \mathbb{R}^{n_p}$ are external parameters (e.g., location of obstacles, map of environment, etc.). The observations may be corrupted by (possibly stochastic) sensor noise $\dy(t) \in \mathbb{R}^{n_{w_y}}$, where $\Vert \dy(t)\Vert \le \bardy$, for all $t$. We note that our results also apply to time-varying $B(t)$ under some conditions on its null-space.

We assume that \eqref{eq:system} is locally incrementally exponentially stabilizable (IES) in domain $D_c \subseteq \X$, that is, there exists an $\alpha$, $\lambda > 0$, and some feedback controller such that for any nominal trajectory $x^*(t) \subseteq D_c$, $\Vert x^*(t) - x(t)\Vert \le \alpha e^{-\lambda t} \Vert x^*(0) - x(0)\Vert$ for all $t$. While stronger than asymptotic stability, many underactuated systems are IES \cite{DBLP:conf/isrr/ManchesterTS15}. We also assume that \eqref{eq:sys_and_obs} is locally universally detectable \cite{DBLP:conf/aucc/ManchesterS14}, which ensures that any two trajectories $x_1(t)$ and $x_2(t)$ in a domain $D_e \subseteq \X$ that yield identical observations $y_1(t)$ and $y_2(t)$ for all $t$ converge to each other as $t \rightarrow \infty$, i.e., $x_1(t) \rightarrow x_2(t)$. Similar assumptions are common in the estimation literature \cite{Maybeck79stochasticsmodels} to ensure estimator convergence, and do not require the full state to be observable instantaneously, e.g., as in \cite{dean}.

\noindent \textbf{Definitions}: We assume $\X$ is partitioned into (un)safe ($\Xunsafe$) $\Xsafe$ sets (e.g., obstacles). Let ($\mathbb{S}_n^{>0}$) $\mathbb{S}_n$ be the set of (positive definite) symmetric $n \times n$ matrices. For $Q \in \mathbb{S}_n$, denote $\bar\lambda(Q)$ and $\underline\lambda(Q)$ as its maximum and minimum eigenvalues. If $Q(x)$ is a matrix-valued function over a domain $D$, we denote $\bar\lambda_D(Q) \doteq \sup_{x\in D} \bar\lambda(Q(x))$ and $\underline\lambda_D(Q) \doteq \inf_{x\in D} \underline\lambda(Q(x))$. Let the Lie derivative of a matrix-valued function $Q(x) \in \mathbb{R}^{n\times n}$ along a vector $y \in \mathbb{R}^n$ be denoted as $\partial_y Q(x) \doteq \sum_{i=1}^n y^i \frac{\partial Q}{\partial x^i}$, where $x^i$ is the $i$th element of vector $x$.
For a smooth manifold $\X$, a Riemannian metric tensor $M: \X \rightarrow \mathbb{S}_{n_x}^{>0}$ provides the tangent space $T_x \X$ with an inner product $\delta_x^\top M(x) \delta_x$, where $\delta_x \in T_x \X$. The length $l(c)$ of a curve $c: [0,1]\rightarrow \X$ between $c(0)$, $c(1)$ is $l(c) \doteq \int_0^1 \sqrt{V(c(s), c_s(s))} ds$, where $V(c(s), c_s(s)) \doteq c_s(s)^\top M(c(s))c_s(s)$, and $c_s(s) \doteq \partial c(s)/\partial s$. The Riemannian distance between $p,q\in\X$ is $d(p,q) \doteq \inf_{c\in \mathcal{C}(p,q)} l(c)$, where $\mathcal{C}(p,q)$ contains all smooth curves between $p$ and $q$; a curve $\gamma(p, q)$ achieving the argmin is called a geodesic.

\subsection{Problem statement}\label{sec:prob}

We formally state the output feedback motion planning problem (OFMP) as follows:

\noindent \textbf{OFMP}: Given start $x_I$, external parameter $\theta \in D_\theta$, goal region $\mathcal{G} \subseteq D_x$ ($D_\theta$, $D_x$ are defined in the next paragraph), and safe set $\normalfont\Xsafe$, we want to plan a state-control trajectory $x^*: [0, T] \rightarrow \X$, $u^*: [0, T] \rightarrow \U$, $x^*(0) = x_I$, under the nominal dynamics $\dot x(t) = f(x(t)) + Bu(t)$ such that in execution on the true system \eqref{eq:system}, $x(t) \in \Xsafe$ for all $t \in [0,T]$ and $x(T) \in \mathcal{G}$. At runtime, we do not observe $x(t)$; we are only given observations $y(t)$ generated by \eqref{eq:observation}, and must track $x^*$ using a (dynamic) output feedback controller that we must also design. We assume $f$, $B$, $B_w$, and $B_y$ are known; $h$ is unknown; $\dx$, $\dy$ are not measurable but $\bardx$ and $\bardy$ are known. If $n_r \le n_x$ of the states can be inferred directly from $y$, we denote these indices as the reduced observation $\yred = \Cred x \in \mathbb{R}^{n_r}$, where $\Cred \in \{0,1\}^{n_r \times n_x}$ is a boolean matrix that selects the observable dimensions of $x$. We assume that we are given $\Cred$. Let $x(t)$ be the executed trajectory of \eqref{eq:system}, and let $\xhat(t)$ be the trajectory of the state estimate. We are given upper bounds $\bar d_c(0)$, $\bar d_e(0)$on the Riemannian distance between the true and estimated initial state $\diste(x(0), \xhat(0))$ and between the true/planned initial state $\distc(x^*(0), x(0))$; $\diste(\cdot,\cdot)$ and $\diste(\cdot,\cdot)$ are defined with respect to (w.r.t.) metrics $\Mc$ and $\Mo$, defined in Sec. \ref{sec:prelim_ccm}. 

To help solve the OFMP, we are given two datasets. The first is $\S = \{h(x_i, \theta_i), x_i, \theta_i\}_{i=1}^{\Ndata}$, a dataset of noiseless (cf. Sec. \ref{sec:conclusion} for discussion on how to relax this assumption) observation-state-parameter triplets, where $x_i \in D_p \subseteq \X$, $\theta_i \in D_\theta \subseteq \Theta$ are collected by any means (sampling, demonstrations, etc.). We assume $D_p$ and $D_\theta$ (the domains where $\S$ is drawn from) are known, though this can be relaxed by estimating these sets as in \cite{lipschitz_ral, cdc}. We are also given a validation dataset $\V = \{h(x_i, \theta_i), x_i, \theta_i\}_{i=1}^{N_\textrm{val}}$ collected i.i.d. in $D_p \times D_\theta$. In the context of \eqref{eq:observation}, $h(x,\theta)$ may be a simulated image, and $B_y\dy(t)$ is the sensor noise at runtime. We also define a ``trusted domain'' for planning, $D = D_x \times D_\theta \subseteq \X \times \Theta$, where $D_x = D_r \cap D_c \cap D_e$ and $D_r$ is defined as follows: for ease, suppose $\Cred$ selects the first $n_r$ indices of $x$, then $D_r = (\Cred D_p) \times \mathbb{R}^{n_x - n_r}$. $D_r$ is defined similarly if $\Cred$ selects other indices (cf. Fig. \ref{fig:dr}). Ultimately, $D_x$ is a set where a stabilizing controller (in $D_c$) and state estimator (in $D_e$) exist, and where the perception is valid (in $D_r$).

\subsection{Control/observer contraction metrics (CCMs/OCMs)}\label{sec:prelim_ccm}

As our approach builds on contraction theory, we provide an overview here. Control contraction theory \cite{manchester} studies incremental stabilizability by measuring the distances between trajectories w.r.t. a Riemannian metric $\Mc: \X \rightarrow \mathbb{S}_{n_x}^{>0}$. For \eqref{eq:system} if $\dx \equiv 0$, a sufficient condition \cite{sumeet_icra} for $\Mc$ to be called a control contraction metric (CCM) is:

\vspace{-8pt}
\begin{subequations}\label{eq:con_ccm}
\begin{eqnarray}
	\gone_{\perp}^\top \Big(-\partial_{\gzero} \Wc(x) + A(x) \Wc(x) + \Wc(x)A(x)^\top + 2 \lambda_c \Wc(x)\Big) \gone_{\perp} \preceq 0 \label{eq:strong_con}\\[-8pt]
	\gone_{\perp}^\top \Big( \partial_{\gone^j} \Wc(x) \Big) \gone_{\perp} = 0,\quad j = 1...n_u,\label{eq:strong_orth}
\end{eqnarray}
\end{subequations}
\vspace{-8pt}

\noindent for all $x \in D_c$, where $\Wc(x) \doteq \Mc^{-1}(x)$, $A(x) = \frac{\partial \gzero(x)}{\partial x}$, and $B_\perp$ is a basis for the null-space of $B$. The CCM also defines a controller $u: \X\times\X\times\U \rightarrow \U$, which takes the current state $x(t)$ and a state/control $x^*(t)$, $u^*(t)$ on the nominal state/control trajectory being tracked $x^*: [0, T] \rightarrow \X$, $u^*: [0, T] \rightarrow \U$, and returns a $u$ that contracts $x$ towards $x^*$ at rate $\lambda_c > 0$. The controller $u(x, x^*, u^*)$ can be computed directly via $\Wc(x)$ (cf. Sec. \ref{sec:method_bounds}). If $\dx \equiv 0$, for any nominal $x^*(t)$, applying $u(x,x^*,u^*)$ renders the system closed-loop IES, i.e., $\Vert x(t) - x^*(t) \Vert \le \alpha_c \Vert x(0) - x^*(0)\Vert e^{-\lambda_c t}$ for $\alpha_c > 0$. For bounded $\dx$, \eqref{eq:system} remains in a tube around $x^*(t)$; we exploit this in Sec. \ref{sec:method_bounds}. Contraction also analyzes the convergence of state observers \cite{DBLP:conf/aucc/ManchesterS14, DBLP:journals/tac/DaniCH15}, i.e., whether a state estimate $\xhat(t)$ approaches the true state $x(t)$. Consider the nominal closed-loop system $\dot x = f(x) + Bu(\xhat,x^*,u^*)$ with noiseless observations $y = h(x,\theta)$ and a nominal observer

\vspace{-8pt}
\begin{equation}\label{eq:observer}
    \dot{\hat x} = f(\hat x) + Bu(\xhat,x^*,u^*) + \textstyle\frac{1}{2}\rho(\xhat) \Mo(\hat x) C(\hat x)^\top (y - h(\hat x,\theta))
\end{equation}

\noindent for the nominal system, where $C(x) = \frac{\partial h(x,\theta)}{\partial x}$, $\rho(x) \ge 0$ is a  multiplier term, and $\Mo: \X \rightarrow \mathbb{S}_{n_x}^{>0}$ is called an observer contraction metric (OCM), which should satisfy

\vspace{-10pt}
\begin{equation}\label{eq:con_ocm}
	\partial_{f+Bu} \Wo(\xhat) + \Wo(\xhat)A(\xhat) + A(\xhat)^\top \Wo(\xhat) - \rho(\xhat)C(\xhat)^\top C(\xhat) \le -2 \lambda_e \Wo(\xhat)
\end{equation}

\noindent for all $\xhat \in D_e \subseteq \X, u \in \U$. Here, $\Wo(\xhat) = \Mo^{-1}(\xhat)$. To show that the estimated and true trajectories $\xhat(t)$ and $x(t)$ converge, we can analyze a nominal ``meta-level'' virtual system with state $q$ \cite{DBLP:journals/csysl/TsukamotoC21}, which recovers the nominal $x(t)$ and $\hat x(t)$ when integrated from initial conditions $q(0) = x(0)$ and $q(0) = \xhat(0)$: 

\begin{equation}\label{eq:system_virtual_nocontrol}
	\dot q = f(q) + Bu(\xhat,x^*,u^*) + \textstyle\frac{1}{2}\rho(\xhat) \Mo(\hat x) C(\hat x)^\top (y - h(q,\theta)).
\end{equation}

By setting $q = \xhat$, we recover the estimator dynamics \eqref{eq:observer}; if we set $q = x$, we recover $\dot x = f(x) + Bu(\xhat,x^*,u^*)$. We can then analyze the convergence of $\xhat$ to $x$ via \eqref{eq:system_virtual_nocontrol}, and \cite{DBLP:journals/csysl/TsukamotoC21} shows that if \eqref{eq:con_ocm} holds, then $\xhat(t)$ contracts at some rate $\gamma \in (0, \lambda_e]$ towards $x(t)$. If $\Mo(x)$ and $C(x)$ are constant, one can show that this holds for $\gamma = \lambda_e$. In particular, $\Vert x(t) - \xhat(t) \Vert \le \alpha_e \Vert x(0) - \xhat(0)\Vert e^{-\lambda_e t}$ for $\alpha_e > 0$, and $\xhat(t)$ remains in a tube around $x(t)$ if \eqref{eq:observer} is perturbed. For polynomial systems of moderate dimension ($n_x \lesssim 12$) with polynomial observation maps, CCMs and OCMs can be found via convex Sum of Squares (SoS) programs \cite{sumeet_icra}. CCMs/OCMs can also be found for high-dimensional non-polynomial systems via learning-based methods (e.g., \cite{dawei, cdc}).  

\section{Method}

\begin{figure}
\includegraphics[width=\textwidth]{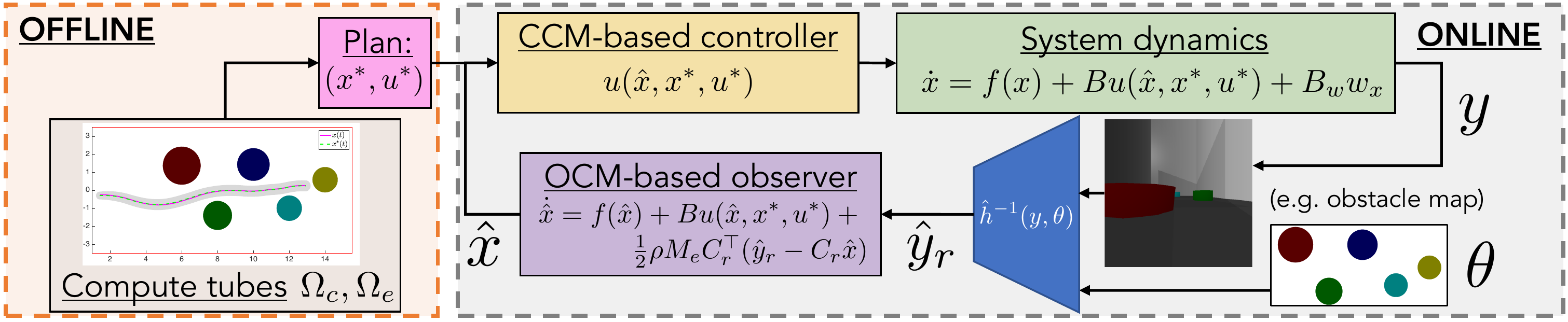}
\caption{Our method. \textbf{Offline}: After learning a perception system $\hinv$ (Sec. \ref{sec:method_learning}), we bound its error to derive tracking tubes under imperfect perception (Sec. \ref{sec:method_bounds}). We use these tubes to find safely-trackable plans (Sec. \ref{sec:method_ofmp}). \textbf{Online}: We design a CCM/OCM-based controller/observer (Sec. \ref{sec:method_ccms_and_ocms}) to track the plan/perform state estimation at runtime, using $\hinv$ to process rich observations $y$. } \label{fig:method}
\end{figure}

We describe our solution to the OFMP (cf. Fig. \ref{fig:method}). Using dataset $\S$, we first train a perception system that returns a reduced-order observation that simplifies the search for the contraction metrics (Sec. \ref{sec:method_learning}). Second, we bound the error of the learned perception module, and propagate this perception error bound through the system to derive bounds on the tracking and estimation error when using a CCM-/OCM-based controller/estimator (Sec. \ref{sec:method_bounds}). Third, we obtain a CCM and OCM which optimizes this bound via SoS programming (Sec. \ref{sec:method_ccms_and_ocms}). Finally, we use these bounds to constrain a planner to return trajectories that enable safe runtime tracking and robust goal reachability from observations (Sec. \ref{sec:method_ofmp}). For space, all proofs for the theoretical results are in App. \ref{sec:app_proofs}.

\subsection{Learning a perception module for contraction-based estimation}\label{sec:method_learning}

Let us reconsider the observer \eqref{eq:observer}, which updates its estimate directly using $y - h(\xhat,\theta)$ in the rich observation space. To implement \eqref{eq:observer}, one can use $\S$ to train a deep approximation of $h$, denoted $\hat h$, design an OCM satisfying \eqref{eq:con_ocm} for $C(\xhat) = \frac{\partial \hat h(\xhat,\theta)}{\partial x}$, and plug $\hat h$ and the OCM into \eqref{eq:observer}. This na\"ive solution is flawed: 1) as $n_y$ is large, learning an accurate $\hat h$ can be difficult; 2) the $C(\hat x)$ in \eqref{eq:con_ocm} becomes the Jacobian of a (non-polynomial) deep network, complicating OCM synthesis by precluding the use of SoS programming. 

We can take a more structured approach if we know which states can be directly inferred from $y$; this is reasonable if the states have semantic meaning (e.g., poses, velocities). Recall $\Cred$ (Sec. \ref{sec:prob}) defines this reduced observation as $\yred = \Cred x \in \mathbb{R}^{n_r}$. We can then learn an approximate inverse $\hinv(y,\theta): \mathbb{R}^{n_y}\times \mathbb{R}^{n_p} \rightarrow \mathbb{R}^{n_r}$ which maps a $y$ and $\theta$ to the reduced observation. Note that if each unique $y$ corresponds to a unique $\yred$, this inverse is well-defined and does not require the full state to be invertible from a single $y$. Concretely, consider a car with position, orientation, and velocity states $[p_x, p_y, \phi, v]$ and RGB-D data from an onboard camera (Fig. \ref{fig:environments}.A) driving in several obstacle fields. In this case, $y_r = [p_x, p_y, \phi]^\top$ and $\theta$ could be the obstacle locations. We model $\hinv$ as a neural network and train it via the mean squared error between $\hinv(y_i, \theta_i)$ and $\Cred x_i$ for all $i \in 1,\ldots,\Ndata$. Note that as the nominal reduced observations are roughly linear, i.e., $\hinv(h(x,\theta),\theta) \approx \Cred x$, this simplifies the nominal observer \eqref{eq:observer} to
    $\dot{\xhat} = f(\xhat) + Bu(\hat x, x^*, u^*) + \textstyle\frac{1}{2}\rho(\xhat)\Mo(\xhat) \Cred^\top\Cred (x - \xhat)$,
and simplifies OCM synthesis: as $\Cred^\top \Cred$ is constant, \eqref{eq:con_ocm} is SoS-representable, despite $\hinv$ being non-polynomial. Compared to the nominal reduced observer, the true observer we use,

\vspace{-13pt}
\begin{equation}\label{eq:observer_reduced}
    \dot{\xhat} = f(\xhat) + Bu(\hat x, x^*, u^*) + \textstyle\frac{1}{2}\rho(\xhat)\Mo(\xhat) \Cred^\top (\hinv(h(x,\theta)+B_y\dy,\theta) - \Cred \xhat),
\end{equation}

\noindent experiences disturbance from model error $B_w \dx$, sensor noise $B_y \dy$, and learning error $\Vert \hinv(h(x,\theta),\theta) - \Cred x\Vert$. Quantifying these errors for our vision-based observer \eqref{eq:observer_reduced} is one of our core contributions and is key in deriving tracking bounds useful for planning.

\subsection{Bounding tracking error and state estimation error for planning}\label{sec:method_bounds}

To begin, assume we have a CCM $\Mc$ and an OCM $\Mo$ that are valid in $D_c \subseteq \X$ and $D_e \subseteq \X$ and which contract at rate $\lambda_c$ and $\lambda_e$, respectively. We discuss CCM/OCM synthesis in Sec. \ref{sec:method_ccms_and_ocms}. Define the nominal closed-loop state and virtual dynamics as:

\vspace{-10pt}
\begin{subequations}\label{eq:nominal}\small
	\begin{align}
     \dot x(t) &= f(x(t)) + Bu(x(t), x^*(t), u^*(t))\label{eq:nominal_x}\\
     \dot q(t) &= f(q(t)) + Bu(\xhat(t), x^*(t), u^*(t)) + \textstyle\frac{1}{2}\rho(q(t))\Mo(q(t))\Cred^\top\Cred(x(t) - q(t))\label{eq:nominal_xhat}
\end{align} 
\end{subequations}
\vspace{-10pt}

\noindent Factor the CCM/dual OCM as $\Mc(x) = R_c(x)^\top R_c(x)$ and $\Wo(x) = R_e(x)^\top R_e(x)$. Let $\gamma_c^t(s)$, $s\in[0,1]$ be the geodesic between $x^*(t)$ and $x(t)$ w.r.t. $\Mc$, and $\gamma_e^t(s)$, $s\in[0,1]$ be the geodesic between $\xhat(t)$ and $x(t)$ w.r.t. $\Wo$. \cite{DBLP:conf/aucc/ManchesterS14} shows if $\gamma_c^t(s) \subseteq D_c$ for all $t,s$ and \eqref{eq:nominal_x} is perturbed by $\dc(t)$, i.e., $\dot x = f(x) + Bu(x, x^*, u^*) + \dc$, the Riemannian distance w.r.t. $\Mc$ between the true and nominal state, $\distc(t) = \distc(x^*(t), x(t))$, satisfies:

\vspace{-2pt}
\begin{equation}\label{eq:rc}
	\dot \distc(t) \le -\lambda_c \distc(t) + \textstyle\int_0^1 \Vert R_c(\gamma_c^t(s)) \dc(t) \Vert ds.
\end{equation}
\vspace{-7pt}

\noindent If $\gamma_e^t(s) \subseteq D_e$ for all $t,s$ and \eqref{eq:nominal_xhat} is perturbed by additive $\dq(t)$  \cite{DBLP:journals/csysl/TsukamotoC21}, the Riemannian distance w.r.t. $\Wo$ between the true and estimated state, $\diste(t) = \diste(x(t), \xhat(t))$, satisfies

\vspace{-5pt}
\begin{equation}\label{eq:re}
	\dot \diste(t) \le -\lambda_e \diste(t) + \textstyle\int_0^1 \Vert R_e(\gamma_e^t(s)) \dq(t) \Vert ds.
\end{equation}
\vspace{-11pt}

\noindent We will use \eqref{eq:rc} and \eqref{eq:re} to obtain upper bounds on the tracking/estimation Riemannian distances, denoted as $\bar\distc(t)$ and $\bar\diste(t)$, respectively. These upper bounds define tracking and state estimation tubes, i.e., a bound on where $x$ and $\xhat$ can be, which we denote as $\Omega_c(t) = \{x \mid \distc(x^*(t), x) \le \bar\distc(t)\}$ and $\Omega_e(t) = \{\xhat \mid \diste(x(t), \xhat) \le \bar\diste(t)\}$, respectively. These tubes are crucial in informing where the planner can safely visit, since tracking any $\Omega_c$-buffered candidate trajectory within $D_x$ which remains in $\Xsafe$ is guaranteed to remain safe. However, for these tubes to be usable in a planner, we need explicit bounds on the integral terms in \eqref{eq:rc} and \eqref{eq:re}.

In this section, we first present the final derived bounds on the integrals (Lemmas \ref{lem:dc} and \ref{lem:de}), describe the ideas behind the derivations, and postpone the full mathematical details to App. \ref{sec:app_bounds}.

\begin{lemma}[$\dot d_c(t)$]\label{lem:dc}
The integral term in \eqref{eq:rc} can be bounded as
	\begin{equation}\label{eq:dc_bound}
		\textstyle\int_0^1 \Vert R_c(\gamma_c^t(s)) \dc(t) \Vert ds \le \sqrt{\maxeigc{\Mc}}\bardx + L_{\Delta k} \diste.
	\end{equation}
\end{lemma}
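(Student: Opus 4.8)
The plan is to first identify the perturbation $\dc$ that appears when the idealized closed-loop dynamics \eqref{eq:nominal_x} (which feeds the \emph{true} state $x$ into the controller) are replaced by the dynamics actually executed at runtime, where the controller receives the \emph{estimate} $\xhat$ and the plant is additionally hit by the model disturbance $B_w \dx$. Writing $k(\cdot) \doteq u(\cdot, x^*(t), u^*(t))$ for the controller with the nominal trajectory held fixed, the discrepancy is
\[
\dc = B\big(k(\xhat) - k(x)\big) + B_w \dx,
\]
so the integrand of \eqref{eq:dc_bound} splits, by the triangle inequality, into a disturbance term and a feedback-mismatch term that I will bound separately.

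For the disturbance term I would pull $R_c$ out in operator norm and use $\Vert R_c(\gamma_c^t(s))\Vert = \maxsing{R_c(\gamma_c^t(s))} = \sqrt{\maxeig{\Mc(\gamma_c^t(s))}} \le \sqrt{\maxeigc{\Mc}}$, which is valid since the geodesic $\gamma_c^t$ stays in $D_c$ by hypothesis, together with $\Vert B_w\Vert \le 1$ and $\Vert \dx\Vert \le \bardx$. The $s$-integral of the resulting constant bound is exactly $\sqrt{\maxeigc{\Mc}}\,\bardx$, the first term of \eqref{eq:dc_bound}.

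For the feedback-mismatch term, since $B(k(\xhat)-k(x))$ does not depend on $s$, the same operator-norm estimate contributes a factor $\sqrt{\maxeigc{\Mc}}$ multiplying $\Vert B(k(\xhat)-k(x))\Vert$. I would then invoke Lipschitz continuity of the map $x \mapsto B k(x)$ over the compact trusted domain to get $\Vert B(k(\xhat)-k(x))\Vert \le L_k\,\Vert \xhat - x\Vert$, and finally convert the Euclidean gap into the Riemannian estimation distance through $\Vert \xhat - x\Vert \le \diste / \sqrt{\mineige{\Wo}}$. This last inequality holds because, for any curve (in particular the $\Wo$-geodesic realizing $\diste$), its Riemannian length dominates its Euclidean displacement by the factor $\sqrt{\mineige{\Wo}}$. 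Collecting the three factors yields the coefficient $L_{\Delta k} = \sqrt{\maxeigc{\Mc}}\, L_k / \sqrt{\mineige{\Wo}}$ on $\diste$, completing the bound.

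The hard part will be establishing the Lipschitz constant $L_k$ for the contraction-based controller. Because $k$ is defined implicitly through the geodesic of $\Mc$ emanating from $x^*$ and the feedback law assembled from $\Wc$, its Lipschitz dependence on the current state is not explicit; I would either (i) assume a uniform Lipschitz bound on $x \mapsto B k(x)$ over the compact trusted domain, as is standard, or (ii) derive one from the smoothness of $\Wc$ and the closed-form controller of Sec. \ref{sec:method_bounds}. Throughout I must be careful that the geodesic $\gamma_c^t$ lives in the $\Mc$-geometry while $\diste$ is measured in the $\Wo$-geometry, so that the two metric-comparison constants $\maxeigc{\Mc}$ and $\mineige{\Wo}$ enter the argument through genuinely different metrics.
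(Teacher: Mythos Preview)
Your decomposition of the disturbance term is fine and matches the paper. The feedback-mismatch term, however, deviates from the paper in two linked ways that matter for the statement as written.

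First, the perturbation $\dc$ in \eqref{eq:dc} is \emph{not} $B\big(k(\xhat)-k(x)\big)+B_w\dx$. The paper explicitly rejects that na\"ive split and instead writes the closed loop as $\dot x = f(x) + B\big(u(\xhat,x^*,u^*)-u_\textrm{closest}\big) + Bu_\textrm{closest} + B_w\dx$, where $u_\textrm{closest}$ is the element of $\U_\textrm{feas}(x,x^*,u^*)$ nearest to $u(\xhat,x^*,u^*)$. Since $u_\textrm{closest}$ already contracts the nominal system at rate $\lambda_c$, the residual in \eqref{eq:dc} is $B\big(u(\xhat,x^*,u^*)-u_\textrm{closest}\big)$, which can be much smaller than $B\big(k(\xhat)-k(x)\big)$ (cf.\ Fig.~\ref{fig:ufeas}). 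Your $\dc$ is a valid but looser perturbation, so the lemma you would end up proving is not the one stated.

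Second, $L_{\Delta k}$ is not an assembled constant $\sqrt{\maxeigc{\Mc}}\,L_k/\sqrt{\mineige{\Wo}}$. It is defined in \eqref{eq:lip} as the Lipschitz constant, \emph{in the Riemannian distance $\diste$}, of the scalar map
\[
\Delta k(\xhat,x,x^*,u^*)=\max_{s\in[0,1]}\big\Vert R_c(\gamma_c^t(s))\,B\big(u(\xhat,x^*,u^*)-u_\textrm{closest}\big)\big\Vert,
\]
which already absorbs the $R_c$ weighting. The paper's argument is then one line: bound the integrand by the max over $s$ to get $\Delta k(\xhat,x,x^*,u^*)$, observe that $\Delta k(x,x,x^*,u^*)=0$ (because when $\xhat=x$ the applied control lies in $\U_\textrm{feas}(x,x^*,u^*)$, so $u_\textrm{closest}=u(\xhat,x^*,u^*)$), and apply \eqref{eq:lip} directly to obtain $L_{\Delta k}\,\diste$. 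No separate Euclidean Lipschitz constant $L_k$, no $\mineige{\Wo}$ conversion, and no operator-norm extraction of $R_c$ are needed for this term. Your route would give a coarser constant and, more importantly, would not recover the specific $L_{\Delta k}$ that appears in \eqref{eq:dc_bound} and is later estimated from data.
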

\noindent In the second term, $L_{\Delta k}$ is the Lipschitz constant of the controller error (to be described later) which, together with state estimate error $\diste$, bounds the destabilizing effect of using incorrect state estimates in feedback control. This term, which can be explicitly estimated and thus concretely informs tube size in planning, is the key novelty of Lemma \ref{lem:dc}. Overall, \eqref{eq:dc_bound} states that tracking degrades with larger dynamics and estimation error.

\begin{lemma}[$\dot d_e(t)$]\label{lem:de}
Let $\bar\sigma(B_y)$ denote the maximum singular value of $B_y$. For constant $\rho$ and $\Mo$, the integral in \eqref{eq:re} simplifies to $\Vert R_e \dq(t) \Vert$ and can be bounded as:
	\begin{equation}\label{eq:lem2}\small
		\Vert R_e \dq(t) \Vert \le \sqrt{\maxeig{\Wo}}\bardx + \textstyle\frac{1}{2}\rho\maxeig{\Mo}^{1/2}\big(L_{\hinv}\sqrt{\maxsing{B_y}}\bardy + \bar\err_{\{1,2,3\}}(x^*,\theta)\big)
	\end{equation}
\end{lemma}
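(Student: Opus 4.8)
The plan is to identify the additive disturbance $\dq$ that perturbs the nominal virtual dynamics \eqref{eq:nominal_xhat}, and then bound $\Vert R_e \dq \Vert$ termwise. First I would compare the true observer \eqref{eq:observer_reduced} against \eqref{eq:nominal_xhat} at the two endpoints of the virtual system: setting $q = \xhat$ recovers the nominal observer whose innovation is $\tfrac{1}{2}\rho\Mo\Cred^\top\Cred(x-\xhat)$, whereas the true observer replaces the ideal reduced observation $\Cred x$ with the learned estimate $\hinv(h(x,\theta)+B_y\dy,\theta)$; setting $q = x$ recovers the nominal true-state flow, while the executed system \eqref{eq:system} additionally carries $B_w\dx$. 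Hence the effective disturbance entering \eqref{eq:re} decomposes into a state-disturbance part $B_w\dx$ and a perception/measurement part $\tfrac{1}{2}\rho\Mo\Cred^\top\big(\hinv(h(x,\theta)+B_y\dy,\theta) - \Cred x\big)$, and $\Vert R_e\dq\Vert$ is bounded by the sum of the two pieces via the triangle inequality. For constant $\rho,\Mo$ the factor $R_e$ is constant in $s$, so the geodesic integral in \eqref{eq:re} collapses to $\Vert R_e\dq\Vert$, as the lemma states.

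Next I would bound each piece using operator-norm identities relating $R_e$ to the eigenvalues of $\Wo$ and $\Mo$. For the state-disturbance part, $\Vert R_e B_w\dx\Vert \le \Vert R_e\Vert\,\Vert B_w\Vert\,\Vert\dx\Vert \le \sqrt{\maxeig{\Wo}}\,\bardx$, using $\Vert R_e\Vert^2 = \maxeig{R_e^\top R_e} = \maxeig{\Wo}$, $\Vert B_w\Vert\le 1$, and $\Vert\dx\Vert\le\bardx$. For the perception part I would factor out $\tfrac{1}{2}\rho$ and use $\Vert R_e\Mo\Cred^\top\Vert \le \Vert R_e\Mo\Vert\,\Vert\Cred^\top\Vert = \maxeig{\Mo}^{1/2}$, which follows from $(R_e\Mo)^\top(R_e\Mo) = \Mo\Wo\Mo = \Mo$ (since $\Wo = R_e^\top R_e$ and $\Mo=\Wo^{-1}$) and $\Vert\Cred\Vert = 1$ (as $\Cred$ has orthonormal rows). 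This reduces the problem to bounding the reduced-observation residual $\Vert \hinv(h(x,\theta)+B_y\dy,\theta) - \Cred x\Vert$.

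For that residual I would split it, again via the triangle inequality, into a sensor-noise term $\hinv(h(x,\theta)+B_y\dy,\theta) - \hinv(h(x,\theta),\theta)$ and a learning-error term $\hinv(h(x,\theta),\theta) - \Cred x$. The first is controlled by the Lipschitz constant $L_{\hinv}$ of the perception map in its observation argument together with a bound on $\Vert B_y\dy\Vert$ expressed through the singular values of $B_y$ and $\bardy$, producing the $L_{\hinv}\sqrt{\maxsing{B_y}}\bardy$ term. The second is precisely the perception error on the noiseless observation, which I would upper bound by the high-probability, trusted-domain estimate $\bar\err_{\{1,2,3\}}(x^*,\theta)$ built for the learned module; its three constituents arise from the validation-set error bound together with Lipschitz transfers from the nominal point $x^*$ (where the bound is cheap to certify) to the true operating point. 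Assembling the two bounded pieces yields \eqref{eq:lem2}.

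The main obstacle I expect is the learning-error term: certifying $\Vert \hinv(h(x,\theta),\theta) - \Cred x\Vert \le \bar\err_{\{1,2,3\}}(x^*,\theta)$ requires the generalization argument over $\V$ to hold with high probability and, crucially, the transfer from $x^*$ to the actual state $x$, which lies in the trusted domain $D_x$ only because it stays within the tracking tube $\bar\distc$; this couples the estimation bound to Lemma \ref{lem:dc}. A secondary care point is the disturbance-identification step: since the state disturbance acts at the $q=x$ endpoint while the perception disturbance acts at the $q=\xhat$ endpoint, one must invoke the two-perturbed-trajectories form of the contraction bound rather than a single shared $\dq$, so that both contributions enter \eqref{eq:re} additively.
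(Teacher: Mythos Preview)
Your proposal is correct and follows essentially the same route as the paper: identify $\dq$ as the sum of the dynamics disturbance $B_w\dx$ (acting at $q=x$) and the perception disturbance $\tfrac{1}{2}\rho\Mo\Cred^\top(\hinv(y,\theta)-\Cred x)$ (acting at $q=\xhat$), bound each piece via operator-norm inequalities, and split the reduced-observation residual into a Lipschitz-noise term and a learning-error term exactly as in \eqref{eq:dq_derivation}. The only cosmetic difference is that the paper obtains $\Vert R_e\Mo\Vert=\sqrt{\maxeig{\Mo}}$ via the Cholesky identity $R_e\Mo=R_e^{-\top}$, whereas you compute $(R_e\Mo)^\top(R_e\Mo)=\Mo$ directly; these are equivalent.
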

\noindent We write Lemma \ref{lem:de} for constant $\rho$ and $\Mo$, as this is the representation used in Sec. \ref{sec:results}. Here, $L_{\hinv}$ is the local Lipschitz constant of $\hinv$, and $\bar\epsilon_{\{1,2,3\}}(x^*,\theta)$ are (spatially-varying) bounds on its error $\Vert \hinv(h(x,\theta),\theta) - \Cred x\Vert$, each with different strengths/weaknesses (cf. Fig. \ref{fig:bounds} for a visual overview). Relative to prior work, Lemma \ref{lem:de} is novel as it bounds high-dimensional measurement error and learned perception module error. Overall, \eqref{eq:lem2} states that estimation accuracy degrades with larger dynamics error, measurement error, and learned perception module error.

\vspace{-5pt}
\subsubsection{Bounding tracking error:}\label{sec:bound_trk_err}
We explain more details behind Lemma \ref{lem:dc}. As Lemma \ref{lem:dc} relies on a bound for $\dc(t)$, we first break down the components that make up $\dc(t)$. Relative to the nominal closed-loop dynamics \eqref{eq:nominal_x}, our true closed-loop system
\begin{equation}\label{eq:system_closed_loop}
	\dot x(t) = f(x(t)) + Bu(\xhat(t), x^*(t), u^*(t)) + B_w(t) \dx(t)
\end{equation}
 is subject to two disturbances. The first is the dynamics error $B_w(t) \dx(t)$. The second is imperfect state feedback: we apply $u(\xhat, x^*, u^*)$ instead of $u(x, x^*, u^*)$, which unlike the latter, may not stabilize \eqref{eq:nominal_x} at rate $\lambda_c$. Na\"ively, one can bound this error by rewriting \eqref{eq:system_closed_loop} as $\dot x = f(x) + \textcolor{Red}{Bu(\xhat, x^*, u^*) - Bu(x, x^*, u^*)} + Bu(x, x^*, u^*) + B_w \dx$,
where the difference between output/perfect state feedback is in \textcolor{Red}{red}. While $\Vert Bu(\xhat, x^*, u^*) - Bu(x, x^*, u^*)\Vert$ is a valid disturbance bound, we can obtain a tighter bound by exploiting the structure of $u(x, x^*, u^*)$. In general, many $u_\textrm{fb} \in \U$ can make $\dot x = f(x) + B(u^*+u_\textrm{fb})$ contract at rate $\lambda_c$ towards $x^*$, w.r.t. $\Mc$. Define $\dot x^* = f(x^*) + Bu^*$; then, the contracting $u_\textrm{fb}$ \cite{sumeet_icra} are defined by a linear inequality constraint, 

\vspace{-7pt}
\begin{equation}\label{eq:ufeas}\small
	\U_\textrm{feas}(x,x^*,u^*) = \{u_\textrm{fb} \mid \gamma_{c,s}^\top(1) \Mc(x){\dot x} - \gamma_{c,s}^\top(0)\Mc(x^*)\dot x^* \le -\lambda_c \distc(x^*, x)^2\},
\end{equation}

\setlength{\intextsep}{5pt}%
\setlength{\columnsep}{10pt}%
\begin{figure}
    \centering
    \includegraphics[width=0.5\textwidth]{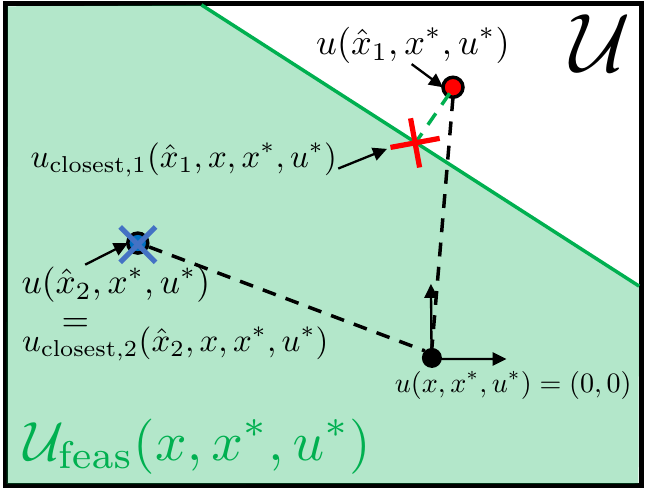}
    \caption{$u_\textrm{closest}$ can be much closer to $u(\xhat,x^*,u^*)$ than $u(x,x^*,u^*)$: we show this for two different state estimates $\xhat_1$ and $\xhat_2$.}
    \label{fig:ufeas}
    \end{figure}

\noindent where $\gamma_{c,s}(\cdot) = \frac{\partial\gamma_c(\cdot)}{\partial s}$.
As in \cite{sumeet_icra}, we select the minimum-norm feasible control to be $u(\xhat, x^*, u^*)$, i.e., $u(\xhat, x^*, u^*) = \arg\min_{u \in \U_\textrm{feas}(\xhat,x^*,u^*)} \Vert u \Vert$. Then, using $\U_\textrm{feas}$, we can rewrite \eqref{eq:system_closed_loop} as $\dot x = f(x) + \textcolor{Red}{B(u(\xhat, x^*, u^*) - u_\textrm{closest}} + u_\textrm{closest}) + B_w \dx$,
where $u_\textrm{closest}(\xhat, x, x^*, u^*) \doteq \arg\min_{u \in \U_\textrm{feas}(x,x^*,u^*)} \Vert u - u(\xhat, x^*, u^*)\Vert$ is the closest control input to $u(\xhat, x^*, u^*)$ that contracts the nominal dynamics at $x$. Bounding the imperfect state feedback as $\Vert Bu(\xhat, x^*, u^*) - Bu_\textrm{closest}\Vert $ instead of $\Vert Bu(\xhat, x^*, u^*) - Bu(x, x^*, u^*)\Vert$ can be far tighter, as $u(\xhat,x^*,u^*)$ may still contract the system at rate $\lambda_c$ (Fig. \ref{fig:ufeas}: $\xhat_2$ case), or there can be a contracting $u$ closer to $u(\xhat,x^*,u^*)$ than $u(x,x^*,u^*)$ (Fig. \ref{fig:ufeas}: $\xhat_1$ case). Combining with the dynamics error, we can write $\dc$:
\begin{equation}\label{eq:dc}
	\dc(t) \doteq Bu(\xhat(t), x^*(t), u^*(t)) - Bu_\textrm{closest}(t) + B_w(t) \dx(t)
\end{equation}
As \eqref{eq:dc} still depends on $x$ and $\xhat$, which are unknown at planning time, extra steps must be taken to obtain a useful bound that is independent of $x$ and $\xhat$; we achieve this by bounding the first two terms of \eqref{eq:dc} via a Lipschitz constant. Define $\Delta k(\xhat, x, x^*, u^*) = \textstyle\max_{s \in [0,1]} \Vert R_c(\gamma_c^t(s)) B(u(\xhat, x^*, u^*) - u_\textrm{closest})\Vert$,
and $L_{\Delta k}$ as its local Lipschitz constant in the first argument, i.e., for all $x^* \in D$, $u^* \in \U$, $\{x \mid \distc(x^*,x) \le \bar c\}$, and $\{\xhat \mid \diste(x,\xhat) \le \bar e\}$ for predetermined $\bar c, \bar e > 0$ (adjustable based on the expected error),

\begin{equation}\label{eq:lip}
	|\Delta k(\xhat_1, x, x^*, u^*) - \Delta k(\xhat_2, x, x^*, u^*))| \le L_{\Delta k}\diste(\xhat_1, \xhat_2).
\end{equation}
See Rem. \ref{rem:lipschitz_estimation} for details on estimating $L_{\Delta k}$. In estimating $L_{\Delta k}$, we measure input distances w.r.t. $\Wo$; this reduces conservativeness due to the form of our estimation error bound. Combining \eqref{eq:dc}-\eqref{eq:lip} yields Lemma \ref{lem:dc}; see App. \ref{sec:app_proofs} for the detailed proof.

\subsubsection{Bounding estimation error:}
Now, we provide more details behind Lemma \ref{lem:de}. To bound $\textstyle\int_0^1 \Vert R_e(\gamma_e^t(s)) \dq(t) \Vert ds$, we first note that $\Vert\dq\Vert$ is bounded by the sum of the disturbance magnitudes when $q = x$ and when $q = \xhat$ \cite{DBLP:journals/csysl/TsukamotoC21}. If $q = x$, \eqref{eq:nominal_xhat} becomes $\dot x = f(x) + Bu(\xhat,x^*,u^*)$; relative to the true closed-loop dynamics \eqref{eq:system_closed_loop}, the disturbance is $B_w(t) \dx(t)$.
If instead $q = \xhat$, \eqref{eq:nominal_xhat} becomes $\xhatdot = f(\xhat) + Bu(\xhat, x^*, u^*) + \textstyle\frac{1}{2}\rho(\xhat)\Mo(\xhat)\Cred^\top \Cred(x - \xhat)$; relative to the true observer \eqref{eq:observer_reduced}, the disturbance is
\begin{equation}\label{eq:de}
\de(t) \doteq \textstyle\frac{1}{2}\rho(\xhat(t))\Mo(\xhat(t))\Cred^\top\textcolor{Red}{(\hinv\big(\h(x(t), \theta) + B_y \dy(t), \theta\big) - \Cred x(t))}.
\end{equation}
Two errors drive $\de(t)$: the perception error $\hinv(h(x, \theta),\theta) - \Cred x$, and the runtime observation noise $B_y\dy$. Combining with the dynamics error gives $\dq(t) \doteq B_w(t) \dx(t) + \de(t)$.
$B_w(t) \dx(t)$ can be bounded as in Lemma \ref{lem:dc}, but $\de(t)$ is harder to bound. Let $y_p = h(x,\theta)$ and $y = h(x,\theta)+B_y\dy$. We rewrite the norm of the \textcolor{Red}{red} term in \eqref{eq:de} as:
\begin{equation}\label{eq:dq_derivation}
	\begin{array}{ll}
		\Vert \hinv\big(y, \theta\big) - \Cred x \Vert &= \Vert \hinv\big(y, \theta\big) - \hinv\big(y_p, \theta\big) + \hinv\big(y_p, \theta\big) - \Cred x \Vert \\
		&\le \underbrace{L_{\hinv}\Vert B_y \dy\Vert}_{\textrm{from measurement noise}} + \underbrace{\Vert \hinv\big(y_p, \theta\big) - \Cred x \Vert}_{\textrm{from learning error} \doteq \err(x, \theta) }.
	\end{array}
\end{equation}
Here, $L_{\hinv}$ is the local Lipschitz constant of the learned inverse function in $y$, i.e.,
\begin{equation}\label{eq:lip_hinv}
	\Vert \hinv(\tilde y, \theta) - \hinv(\check y, \theta) \Vert \le L_{\hinv}\Vert \tilde y - \check y\Vert,\quad \forall \tilde y, \check y \in D_y \oplus \Y_d,\quad \forall \theta \in D_\theta,
\end{equation}
where $D_y = h(D_r,D_\theta)$ is the image of the training data domains, $\oplus$ is the Minkowski sum, and $\Y_d = \{B_y \dy \mid \Vert \dy \Vert \le \bardy\}$ is the set of feasible measurement noise. The first braced term in \eqref{eq:dq_derivation} bounds the effect of measurement error on the reduced observation and is valid for all $(x, \theta) \in D_r \times D_\theta$ and observation noise satisfying $\Vert \dy \Vert \le \bardy$. 

\vspace{5pt}
\begin{figure}
\includegraphics[width=\textwidth]{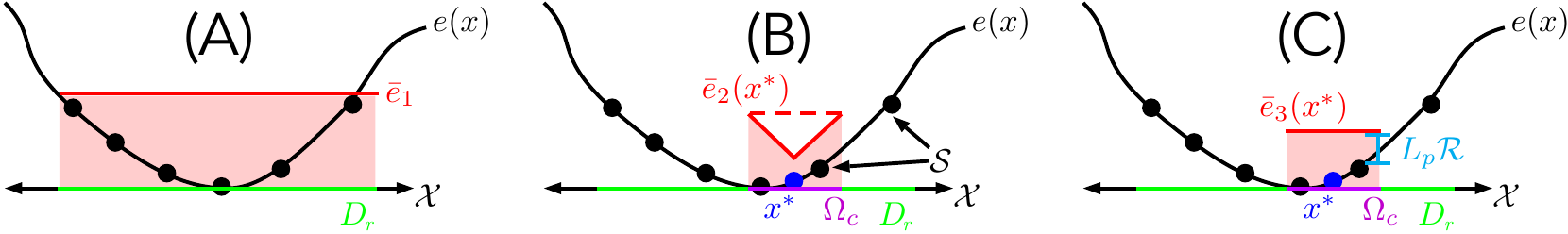}
\caption{Our perception error bounds. (A) $\bar\err_1$ is simple but conservative. B) $\bar\err_2(x^*)$ is tighter, as it only seeks to be valid over the tube $\Omega_c$. However, it scales linearly with the size of $\Omega_c$. C) $\bar\err_3(x^*)$ can be tighter for larger $\Omega_c$ by adding a Lipschitz-based buffer to the largest training error in $\Omega_c$.} \label{fig:bounds}
\end{figure}\vspace{5pt}

Now, consider the second braced term in \eqref{eq:dq_derivation}. How can we bound the learned perception module error $\err(x,\theta) \doteq \Vert \hinv\big(\h(x, \theta), \theta\big) - \Cred x \Vert$ over $D_r \times D_\theta$? We describe three options (Fig. \ref{fig:bounds}) at a high level, highlight their strengths/drawbacks, and provide the details in App. \ref{sec:app_bounds}. The first bound, denoted $\bar\err_1$, is a constant bound on $\err(x,\theta)$ globally over $D_r \times D_\theta$ (Fig. \ref{fig:bounds}.A). This works well if the error is consistent, but is loose if there are any error spikes. The second bound (Fig. \ref{fig:bounds}.B), denoted $\bar\err_2(x^*,\theta)$, bounds the error only in the tube $\Omega_c$ around a nominal $x^*$, using the Lipschitz constant of $\err(x,\theta)$ (denoted $L_p$). Due to its locality, $\bar\err_2(x^*,\theta)$ can be tighter than $\bar\err_1$; however, it scales linearly with the size of $\Omega_c$, even if $\err(x,\theta)$ remains constant. The third bound, $\bar\err_3(x^*,\theta)$ (Fig. \ref{fig:bounds}.C), also bounds the error in the tube but avoids the linear scaling by taking the worst training error in $\Omega_c$ and buffering it with a \textit{constant} value, which depends on $L_p$ and the dataset dispersion $\disp$. Each of these bounds $\bar\err_{\{1,2,3\}}$ on $\err(x,\theta)$ can be plugged into Lemma \ref{lem:de} to upper bound $\err(x,\theta)$; see App. \ref{sec:app_bounds} for details.

\subsubsection{Integrating the differential inequalities:}
Now that we can bound the RHSs of the differential inequalities \eqref{eq:rc} and \eqref{eq:re} via Lemmas \ref{lem:dc} and \ref{lem:de}, we show how these bounds on $\dot\distc$ and $\dot\diste$ bound the values of $\distc$ and $\diste$, thereby providing the desired tubes. By grouping terms in \eqref{eq:rc}-\eqref{eq:re}, we have the following affine vector-valued differential inequality, 
\begin{equation}\label{eq:vector_inequality}\small
	\hspace{-1pt}\begin{bmatrix}
		\dot \distc \\
		\dot \diste
	\end{bmatrix} \le \begin{bmatrix}
		-\lambda_c & L_{\Delta k} \\
		(*) & -\lambda_e
	\end{bmatrix} \begin{bmatrix}
 \distc \\ \diste	
 \end{bmatrix} + \begin{bmatrix}
 	\sqrt{\maxeigc{\Mc}} \bardx \\ \sqrt{\maxeig{\Wo}} \bardx + \textstyle\frac{\rho}{2}\maxeig{\Mo}^{1/2}\big(L_{\hinv}\bardy + \bar\err_{\{1,\tilde 2,3\}}(x^*,\theta)\big)
 \end{bmatrix},\hspace{-15pt}
\end{equation}
where we regroup the terms for $\bar\err_2(x^*,\theta)$ as  $\bar\err_{\tilde 2}(x^*, \theta) \doteq \bar\err_{2}(x^*, \theta)- L_p \distc/\sqrt{\mineigc{\Mc}}$, and $(*) = 0.5 L_p \rho\sqrt{\maxeig{\Mo}/\mineigc{\Mc}}$ if using $\bar\err_2$ and 0 else. Then, we have this result:
\begin{theorem}[From derivative to value]\label{thm:upper_bound}
Let RHS denote the right hand side of \eqref{eq:vector_inequality}. Given bounds on the Riemannian distances at $t=0$: $\distc(0) \le \bar d_c(0)$ and $\diste(0) \le \bar d_e(0)$, upper bounds $\bar\distc(t) \ge \distc(t)$ and $\bar\diste(t) \ge \diste(t)$ for all $t \in [0, T]$ can be written as 
	\begin{equation}\label{eq:trk_bnd}\small
		\begin{bmatrix}\distc(t) \\ \diste(t) \end{bmatrix} \le \int_{\tau=0}^{t} \textrm{RHS}\Big(\tau,\begin{bmatrix}\distc \\ \diste \end{bmatrix}\Big) d\tau \doteq \begin{bmatrix}\bar\distc(t) \\ \bar\diste(t)\end{bmatrix}, \quad \distc(0) = \bar d_c(0),\ \diste(0) = \bar d_e(0).
	\end{equation}
\end{theorem}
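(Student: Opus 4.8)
The plan is to prove Theorem \ref{thm:upper_bound} as an application of a comparison principle for systems of differential inequalities. Writing $z(t) = [\distc(t), \diste(t)]^\top$ and letting $F(t,z)$ denote the affine right-hand side of \eqref{eq:vector_inequality}, the inequality \eqref{eq:vector_inequality} has the form $\dot z(t) \le F(t, z(t))$ componentwise, while the claimed bounds $\bar z(t) = [\bar\distc(t), \bar\diste(t)]^\top$ are \emph{by definition} the solution of the companion ODE $\dot{\bar z}(t) = F(t, \bar z(t))$ taken with equality, subject to $\bar z(0) = [\bar d_c(0), \bar d_e(0)]^\top \ge z(0)$. So the theorem reduces to the statement that, under these hypotheses, $z(t) \le \bar z(t)$ componentwise for all $t \in [0, T]$.

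The key structural observation that makes this vector comparison valid is that $F$ is \emph{quasi-monotone} (equivalently, the linear part of the system is cooperative, i.e., its matrix is Metzler): each component $F_i(t, z)$ is nondecreasing in $z_j$ for $j \ne i$. This holds precisely because the off-diagonal entries of the system matrix in \eqref{eq:vector_inequality} are nonnegative. The $(1,2)$ entry is $L_{\Delta k} \ge 0$, being a Lipschitz constant. The $(2,1)$ entry $(*)$ is $0$ when using $\bar\err_1$ or $\bar\err_3$, and equals $0.5 L_p \rho \sqrt{\maxeig{\Mo}/\mineigc{\Mc}} \ge 0$ when using $\bar\err_2$; in the latter case the linear-in-$\distc$ growth of $\bar\err_2$ with the tube radius is exactly what has been pulled out of the disturbance vector into this nonnegative off-diagonal entry via the regrouping $\bar\err_{\tilde 2} = \bar\err_2 - L_p \distc/\sqrt{\mineigc{\Mc}}$. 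Thus in all three cases the system is quasi-monotone increasing.

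Given quasi-monotonicity, I would invoke the standard comparison theorem for cooperative systems (the M\"uller/Kamke theorem): if $\dot z \le F(t, z)$, $\dot{\bar z} = F(t, \bar z)$, $z(0) \le \bar z(0)$, and $F$ is continuous and quasi-monotone increasing in $z$, then $z(t) \le \bar z(t)$ componentwise for all $t$ in the interval of existence. Applying this with the above $F$ and initial condition $\bar z(0) = [\bar d_c(0), \bar d_e(0)]^\top$ from the hypothesis directly yields the claimed bounds $\distc(t) \le \bar\distc(t)$ and $\diste(t) \le \bar\diste(t)$.

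The main obstacle is technical rather than conceptual: discharging the hypotheses of the comparison theorem cleanly. I would confirm that $F(t,z)$ is continuous in $t$ (its coefficients depend on $x^*(t), \theta$ through the $\bar\err$ bounds, which vary continuously along the plan) and Lipschitz in $z$, so that the companion ODE has a unique solution $\bar z(t)$ on $[0,T]$; since $F$ is affine in $z$ this is immediate once the time-varying coefficients are bounded. A secondary subtlety is that the scalar inequalities \eqref{eq:rc}--\eqref{eq:re}, and hence \eqref{eq:vector_inequality}, are only valid while the geodesics $\gamma_c^t$ and $\gamma_e^t$ remain in $D_c$ and $D_e$; strictly, the comparison conclusion should therefore be read as holding over the time interval on which the tubes $\Omega_c(t), \Omega_e(t)$ stay inside the trusted domain, which is exactly the regime the planner enforces in Sec. \ref{sec:method_ofmp}.
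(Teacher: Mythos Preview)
Your approach is essentially the same as the paper's: both invoke a vector comparison principle for quasi-monotone (cooperative/Metzler) systems, verify the off-diagonal entries $L_{\Delta k}$ and $(*)$ are nonnegative, and check continuity of the right-hand side so the companion ODE has a solution on $[0,T]$.

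One technical point you gloss over that the paper addresses explicitly: you assert that the $\bar\err$ bounds ``vary continuously along the plan,'' but $\bar\err_3(x^*(t),\theta)$ is in general \emph{not} continuous in $t$, because the $\max$ over datapoints in $\Omega_c^B$ can jump discontinuously as $\Omega_c(t)$ sweeps over the dataset. The paper handles this by replacing $\bar\err_3$ with a continuous upper envelope (anticipatory smoothing of the jumps), which is possible at planning time since the nominal trajectory and dataset are known. Without this modification the continuity hypothesis of the comparison theorem is not satisfied in the $\bar\err_3$ case.
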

Evaluating the integral in \eqref{eq:trk_bnd} is efficient as RHS is affine, so $\bar\distc$ and $\bar\diste$ can be readily used in planning (cf. Sec. \ref{sec:method_ofmp}). However, note that these tubes are only \textit{locally valid}, e.g., evaluating the tubes outside of $D_x$ will give incorrect values. We detail a set of validity conditions in Sec. \ref{sec:method_ofmp}, prove their sufficiency in Thm. \ref{thm:lmtd}, use them in our planner, and show in Sec. \ref{sec:results} that a baseline that ignores these conditions is unsafe. Finally, we close with a remark on how we estimate the constants in the bounds.

\begin{remark}[Estimating constants from data]\label{rem:lipschitz_estimation}
	The derived bounds depend on several constants that are unknown \textit{a priori}, such as $L_{\Delta k}$ and $L_{\hinv}$, and if $\bar\err_1$, $\bar\err_2$, or $\bar\err_3$ is being used, $\bar\err_1$, $L_p$, and $\{L_p, \disp\}$ also need to be estimated, respectively. As overapproximating each constant also yields valid (and looser) bounds, we use the i.i.d. validation set $\V$ to overestimate each constant via a sampling-based approach based on extreme value theory \cite{cdc}. This returns a value which overestimates the true constant with a user-desired probability $\delta$, where $\delta$ holds in the limit of infinite samples. See \cite{cdc, lipschitz_ral, weng2018evaluating} for details.
\end{remark}

\subsection{Optimizing CCMs and OCMs for output feedback}\label{sec:method_ccms_and_ocms}

We briefly discuss how we obtain the CCM/OCM that define the controller/observer; for space, we detail our method in App. \ref{sec:app_method_ccms_and_ocms}. We write two SoS programs to independently synthesize the CCM/OCM, which are approximately optimized to minimize their tube sizes. We search over polynomial CCMs and constant OCMs. For polynomial dual CCMs $\Wc(x)$, we also find a constant metric $\bar\Wc \succeq \Wc(x)$, for all $x$, in order to simplify constraint checking in Sec. \ref{sec:method_ofmp}. For linear systems, these SoS programs simplify to a standard semidefinite program (SDP), which scale to higher-dimensional systems.

\subsection{Solving the OFMP}\label{sec:method_ofmp}

\begin{algorithm*}\label{alg:rrt}
\scriptsize\KwIn{$x_I$, $\mathcal{G}$, $\theta$, $\S$, training error $\{e_i\}_{i=1}^{\Ndata}$, estimated constants, $\bar\distc(0)$, $\bar\diste(0)$, $\bar c$, $\bar e$}
\DontPrintSemicolon
\SetKwFunction{SampleState}{SampleState}
\SetKwFunction{SampleNode}{SampleNode}
\SetKwFunction{SampleCandidateControl}{SampleProposedControl}
\SetKwFunction{NearestNeighbor}{NN}
\SetKwFunction{IntegrateLearnedDyn}{IntegrateDyn}
\SetKwFunction{DCheck}{DCheck}
\SetKwFunction{ErrBnd}{ErrBnd}
\SetKwFunction{SteerInDEpsilon}{SteerInDEpsilon}
\SetKwFunction{OneStepReachable}{OneStep}
\SetKwFunction{Model}{Model}
\SetKwFunction{ConstructPath}{ConstructPath}
\SetKwFunction{InCollision}{InCollision}

$\T \leftarrow \{(x_I, \bar\distc(0), \bar\diste(0), 0)\}$ \tcp*{node: state, CCM/OCM Riem. dist. bound, time}
$\mathcal{P} \leftarrow \{(\emptyset, \emptyset)\}$\tcp*{parent: previous control/dwell time}

\While{\upshape True}{
	$(x_{\textrm{n}}, \bar d_c^n, \bar d_e^n, t_\textrm{n}) \leftarrow $ \SampleNode{$\T$} \tcp*{sample node from tree}
	$(u_\textrm{p}, t_\textrm{p}) \leftarrow $ \SampleCandidateControl() \tcp*{sample ctrl/dwell time}
	$(x_\textrm{p}^*(t), u_\textrm{p}^*(t)), t\in[t_n,t_n+t_p) \leftarrow$ \IntegrateLearnedDyn($x_{\textrm{n}}$, $u_\textrm{p}$, $t_\textrm{p}$)\tcp*{get extension}
	$(\bar d_c^n(t), \bar d_e^n(t)), t\in[t_n,t_n+t_p) \leftarrow$ \ErrBnd($\bar d_c^n$, $\bar d_e^n$, $x_\textrm{p}^*(t)$, $u_\textrm{p}^*(t)$, $\S$, $\{e_i\}$, $\theta$)\tcp*{new tube}
	$(b_L^c, b_L^e) \leftarrow (\bar d_c^n(t) \le \bar c, \bar d_e^n(t) \le \bar e), \forall t \in [t_n, t_n+t_p)$\tcp*{check upper bound}
	$b_c \leftarrow \Omega_c^n(t) \subseteq (D_c \cap D_r \cap \Xsafe), \forall t \in [t_\textrm{n}, t_\textrm{n}+t_\textrm{p})$\tcp*{check tracking tube}
	$b_e \leftarrow \Omega_c^n(t) \oplus (\Omega_e^n(t) \ominus \{x(t)\}) \subseteq (D_e \cap D_c), \forall t \in [t_\textrm{n}, t_\textrm{n}+t_\textrm{p})$\tcp*{chk. estimator tube}
	\lIf{$b_L^c \wedge b_L^e \wedge b_c \wedge b_e$}{$\T \leftarrow \T \cup \{(x_c^*(t_\textrm{n}+t_\textrm{p}), \bar d_c^n(t_\textrm{n}+t_\textrm{p}), \bar d_e^n(t_\textrm{n}+t_\textrm{p}), t_\textrm{p})\}$; $\mathcal{P} \leftarrow \mathcal{P} \cup \{(u_\textrm{p}, t_\textrm{n}+t_\textrm{p})\}$}
	\lElse{continue\tcp*[f]{add extension if all checks pass}}
	\lIf{$\exists t, \Omega_c^n(t) \subseteq \mathcal{G}$}{break; return plan \tcp*[f]{return if in $\mathcal{G}$}}
}
\caption{\textbf{C}ontraction-based \textbf{O}utput feedback \textbf{RRT} (\lmtds)}
\end{algorithm*}\vspace{-3pt}

Given the CCM, OCM, and the ability to compute tracking tubes, we can now solve the OFMP. Our solution builds upon a kinodynamic RRT \cite{lavalle2006planning}, though we note that the tubes derived in Sec. \ref{sec:method_bounds} are planner-agnostic. We grow a search tree $\T$ by integrating sampled controls held for sampled dwell-times until $\mathcal{G}$ is reached. To ensure we stay in $\Xsafe$ at runtime, we impose extra constraints on each candidate transition, which are informed by the tubes; this translates to a restriction on where $\T$ can grow (cf. Fig. \ref{fig:lmtd}).

\setlength{\intextsep}{5pt}%
\setlength{\columnsep}{5pt}%
\begin{wrapfigure}{R}{0.6\textwidth}
    \centering
    \includegraphics[width=0.95\textwidth]{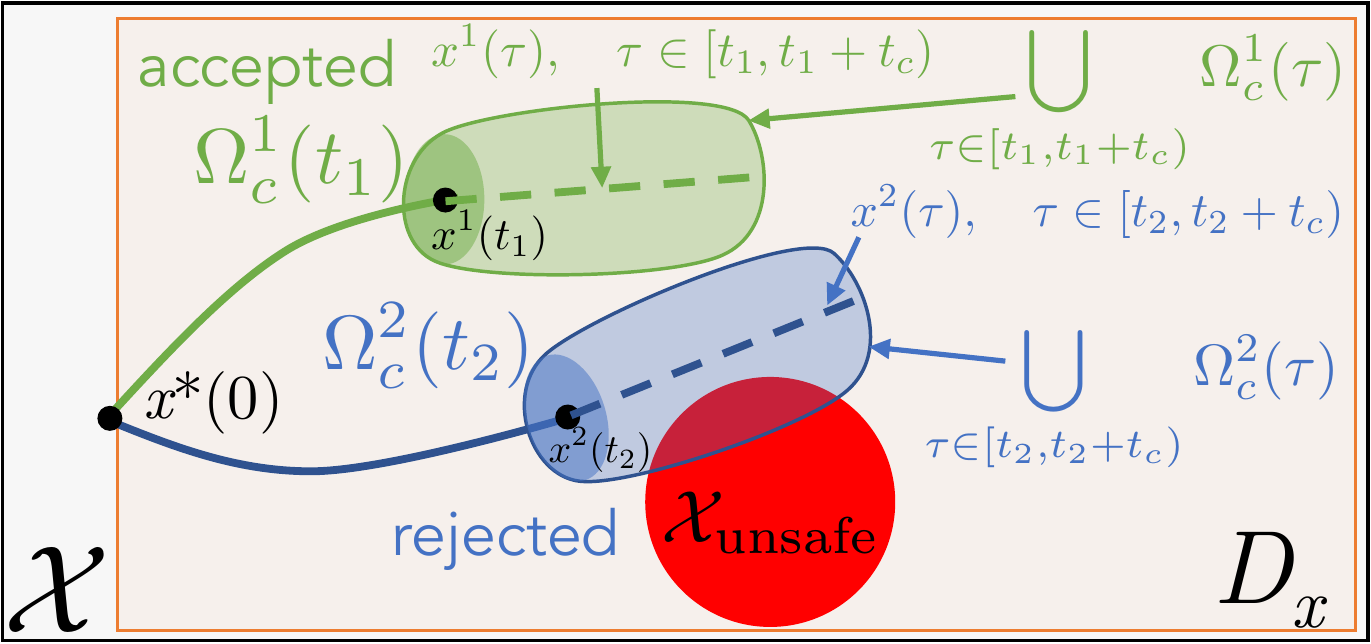}
    \caption{Visualization of Alg. \ref{alg:rrt}.}
    \label{fig:lmtd}
\end{wrapfigure}

To use the Riemannian distance bounds $\bar\distc(t)$ and $\bar\diste(t)$ from \eqref{eq:trk_bnd} in planning, recall that these bounds define sets centered around $x^*(t)$ and $x(t)$, $\Omega_c(t)$ and $\Omega_e(t)$, which $x$ and $\xhat$ are guaranteed to remain within. We can use these sets for collision and constraint checking. If the metric defining $\Omega(t)$ is constant, each $\Omega(t)$ defines an ellipsoid, i.e., $\Omega_c(t) = \{x(t) \mid (x(t) - x^*(t))^\top \Mc (x(t) - x^*(t)) \le \bar\distc(t)^2\}$ and $\Omega_e(t) = \{\xhat(t) \mid (\xhat(t) - x(t))^\top \Wo (\xhat(t) - x(t)) \le \bar\diste(t)^2\}$. If the metric is state-dependent (as is the case for some CCMs we use), we can use $\bar \Wc$ (see Sec. \ref{sec:method_ccms_and_ocms}) to obtain an ellipsoidal outer approximation of $\Omega_c(t)$: $\Omega_c(t) \subseteq \{x(t) \mid (x(t) - x^*(t))^\top (\bar\Wc)^{-1} (x(t) - x^*(t)) \le \bar\distc(t)^2 \} \doteq \tilde\Omega_c(t)$ that can ease constraint checking. Thus, we can guarantee \textit{at planning time} that \textit{in execution}, $x(t) \in \tilde\Omega_c(t)$, and $\xhat(t) \in \tilde\Omega_c(t) \oplus (\Omega_e(t) \ominus \{x(t)\})$, where $A \ominus B \doteq \{x - y \mid x \in A, y \in B\}$. As \eqref{eq:trk_bnd} defines $\Omega$ for \textit{any} nominal trajectory, we can quickly compute tubes along all edges in $\T$.  For instance, suppose we wish to extend from some node in $\T$, $x_n^*(t_n)$, which satisfies $d_c^n(t_n) \le \bar d_c^n(t_n)$ and $\diste^n(t_n) \le \bar d_e^n(t_n)$, to a candidate state $x_n^*(t_n+t_p)$ by applying control $u$ over $[t_n, t_n+t_p)$. Then, using \eqref{eq:trk_bnd}, we can obtain $\bar d_c^n(t)$ and $\bar d_e^n(t)$, for all $t\in[t_n, t_n+t_p)$, and to remain collision-free in execution, we require the induced $\tilde \Omega_c(t) \subseteq \Xsafe$; we check this in line 9 of our planner, Alg. \ref{alg:rrt}. Here, we assume obstacles are inflated to account for robot geometry.

To remain collision-free at runtime, we must add extra constraints on $\T$ to ensure the tubes are valid, as discussed in Sec. \ref{sec:method_bounds}. We describe these constraints now, and prove they are sufficient in Thm. \ref{thm:lmtd}. At a high level, the estimated constants, CCM, and OCM must be valid for any $x$ and $\xhat$ that can be reached at runtime. Thus, in line 8, we ensure $\distc(t)$ and $\diste(t)$ remain less than $\bar c$ and $\bar e$ for all time, so that $L_{\Delta k}$ \eqref{eq:lip} is valid. In line 9, we ensure that $\Omega_c(t) \subseteq D_c \cap D_r$, i.e., the system remains where the controller can contract $x$ towards $x^*$, and $\bar\err_i$ is valid. In line 10, we ensure $\xhat$ remains in $D_e \cap D_c$; this ensures that \eqref{eq:observer_reduced} contracts towards the true state $x$ via \eqref{eq:con_ccm}, and that a feasible feedback control exists in \eqref{eq:ufeas}; ensuring this at planning time (when we only know $x^*(t)$) requires a Minkowski sum of $\Omega_c$ and $\Omega_e \ominus \{x(t)\}$. Constraint-satisfying candidate extensions are added to $\T$ (line 11); else, they are rejected (line 12). This continues until the goal is reached (line 13). We visualize our planner (Fig. \ref{fig:lmtd}), \textbf{C}ontraction-based \textbf{O}utput feedback \textbf{RRT} (CORRT), detailed in Alg. \ref{alg:rrt}. Finally, Thm. \ref{thm:lmtd} shows our method ensures safety and goal reachability if all estimated constants are valid; as our estimates are probabilistically-valid, the overall guarantees are probabilistic (cf. Rem. \ref{rem:overall_correctness}):\vspace{-5pt}

\begin{theorem}[\lmtd correctness]\label{thm:lmtd}
	Assume that $L_{\Delta k}$, $L_{\hinv}$, and the estimated constants in $\bar\err_{\{1,2,3\}}$ are valid over their computed domains. Then Alg. \ref{alg:rrt} returns a trajectory $(x^*(t), u^*(t))$, which when tracked on the true system \eqref{eq:system} using $u(\xhat, x^*, u^*)$ with state estimates $\xhat$ generated by \eqref{eq:observer_reduced}, reaches $\mathcal{G}$ while satisfying $x(t) \in \Xsafe$, for all $t \in [0, T]$.
\end{theorem}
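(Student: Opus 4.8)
The plan is to show that the tubes $\Omega_c(t)$ and $\Omega_e(t)$ computed by the planner actually contain the executed state $x(t)$ and state estimate $\xhat(t)$ throughout runtime, so that the collision and goal checks performed on the tubes (Alg.~\ref{alg:rrt}, lines~9 and~13) transfer directly to the true trajectory. Concretely, I would prove that $\distc(t) \le \bar\distc(t)$ and $\diste(t) \le \bar\diste(t)$ for all $t \in [0,T]$, where $\bar\distc, \bar\diste$ are the integrated bounds of Thm.~\ref{thm:upper_bound}. Given this, safety is immediate: line~9 certifies $\Omega_c(t) \subseteq \Xsafe$, so $x(t) \in \Omega_c(t) \subseteq \Xsafe$; and goal reachability follows since the termination check in line~13 certifies $\Omega_c(T) \subseteq \mathcal{G}$, whence $x(T) \in \mathcal{G}$.

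The core difficulty is a circular dependency: Lemmas~\ref{lem:dc} and~\ref{lem:de}, and hence the vector inequality \eqref{eq:vector_inequality} that Thm.~\ref{thm:upper_bound} integrates, are valid only when the validity conditions hold, namely that the geodesics $\gamma_c^t, \gamma_e^t$ lie in $D_c, D_e$, that $\distc \le \bar c$ and $\diste \le \bar e$ so $L_{\Delta k}$ is valid, and that $\Omega_c(t) \subseteq D_r$ so $\bar\err_{\{1,2,3\}}$ is valid; yet these are exactly the conditions that the bound $\distc(t)\le\bar\distc(t)$ would certify via the set containments checked in lines~8--10. I would break this circularity with a continuity/bootstrapping argument. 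Define $t^\star = \sup\{t : \distc(\tau)\le\bar\distc(\tau)\wedge\diste(\tau)\le\bar\diste(\tau),\ \forall\tau\le t\}$; the hypotheses $\distc(0)\le\bar d_c(0)$, $\diste(0)\le\bar d_e(0)$ give $t^\star>0$. On $[0,t^\star)$ the bounds hold, so $x(t)\in\Omega_c(t)$ and, because $\xhat(t)=x(t)+(\xhat(t)-x(t))$ with $x(t)\in\Omega_c(t)$ and $\xhat(t)\in\Omega_e(t)$, the Minkowski-sum set of line~10 contains $\xhat(t)$. The planner's checks then certify all validity conditions on $[0,t^\star)$: in particular each geodesic, being a minimizing curve, lies inside its Riemannian ball, hence inside $\Omega_c(t)\subseteq D_c$ (resp.\ the estimator set $\subseteq D_e$). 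Thus Lemmas~\ref{lem:dc}--\ref{lem:de} apply and \eqref{eq:vector_inequality} holds on $[0,t^\star)$.

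The next step is a comparison argument, which I expect to be the technical linchpin. Writing $z=(\distc,\diste)^\top$, \eqref{eq:vector_inequality} reads $\dot z \le Mz + b(t)$ where $M$ has nonnegative off-diagonal entries ($L_{\Delta k}\ge 0$ and $(*)\ge 0$), i.e.\ $M$ is Metzler and the system is cooperative/monotone. Since $\bar z=(\bar\distc,\bar\diste)^\top$ solves $\dot{\bar z}=M\bar z+b(t)$ with $z(0)\le\bar z(0)$ componentwise, the comparison lemma for cooperative systems yields $z(t)\le\bar z(t)$ componentwise on $[0,t^\star)$, and by continuity up to and including $t^\star$. This contradicts the definition of $t^\star$ unless $t^\star=T$, so the bounds hold on all of $[0,T]$. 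I would handle the $\bar\err_2$ case by noting that the regrouping into $\bar\err_{\tilde 2}$ merely moves a nonnegative coupling term into the $(*)$ entry, so the Metzler structure — and hence monotonicity — is preserved.

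Finally, I would assemble the conclusion. With $\distc(t)\le\bar\distc(t)$ established for all $t$, $x(t)$ lies in $\Omega_c(t)$ (or its ellipsoidal outer approximation $\tilde\Omega_c(t)$), so line~9's containment $\Omega_c(t)\subseteq\Xsafe$ yields runtime safety and line~13's $\Omega_c(T)\subseteq\mathcal{G}$ yields $x(T)\in\mathcal{G}$. A minor bookkeeping point to verify is that the per-edge bounds concatenate consistently: each tree node stores $(\bar d_c^n,\bar d_e^n)$ that become the initial condition for the next edge (lines~4,~7,~11), so the global tube along the returned trajectory coincides with the one obtained by integrating \eqref{eq:vector_inequality} from $t=0$. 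The standing assumption that $L_{\Delta k}$, $L_{\hinv}$, and the $\bar\err_{\{1,2,3\}}$ constants are valid over their domains is precisely what licenses invoking the lemmas throughout, with the probabilistic qualifier entering only through the data-driven estimation of these constants (cf.\ Rem.~\ref{rem:lipschitz_estimation}).
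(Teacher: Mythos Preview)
Your proposal is correct and follows the same overall logic as the paper: show the computed tubes are valid (i.e., $\distc(t)\le\bar\distc(t)$, $\diste(t)\le\bar\diste(t)$), then read off safety and goal reachability from the planner's checks in lines~9 and~13. The paper's proof packages the Metzler/comparison step separately as Theorem~\ref{thm:upper_bound} and then, in the proof of Theorem~\ref{thm:lmtd}, simply walks through each constant in \eqref{eq:vector_inequality} (CCM eigenvalues and $\lambda_c$, OCM eigenvalues and $\lambda_e$, and the data-estimated $L_{\Delta k}$, $L_{\hinv}$, $\bar\err_1$, $L_p$, $\disp$) and points to the specific line of Alg.~\ref{alg:rrt} (lines~8--10) that keeps the relevant trajectory inside that constant's domain of validity.

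The one genuine difference is that you explicitly break the circular dependency with a continuity/bootstrapping argument via $t^\star$, whereas the paper's proof leaves this implicit: it asserts ``the tubes are valid if $x(t)$ remains in the corresponding domains of validity'' and then verifies the planner's checks enforce those domain containments on $\Omega_c$ and $\Omega_e$, without spelling out why membership of $x(t)$ in $\Omega_c(t)$ can be assumed while establishing it. Your $t^\star$ argument is the standard way to close that loop and is a useful addition; otherwise the two proofs coincide, including the observation that geodesics lie inside their Riemannian sublevel sets and that the off-diagonal entries $L_{\Delta k}$ and $(*)$ are nonnegative so the comparison principle applies.
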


\section{Results}\label{sec:results}

We evaluate \lmtd on a 4D car with RGB-D observations, a 6D quadrotor with RGB observations, and a 14D acceleration-controlled 7DOF arm with RGB observations. All observations are rendered in PyBullet. We compare with three baselines; two are shared across experiments, so we overview them here. To show the need to plan where the CCM/OCM are valid and the error bounds are accurate, Baseline 1 (B1) plans using the tracking tubes from \eqref{eq:trk_bnd} inside Alg. \ref{alg:rrt} but is \textit{not constrained} to stay within $D$, i.e., the checks in line 8-10 of Alg. \ref{alg:rrt} are relaxed. To show the need to consider estimation error in planning, Baseline 2 (B2) assumes perfect state knowledge in computing its tubes, i.e., $\diste(t) \equiv 0$. All baselines execute with the same CCM/OCM as our method. See Table \ref{table:stats} for error statistics and the video \url{http://tinyurl.com/wafr22corrt} for visualizations.

\begin{table*}\centering
\begin{adjustbox}{max width=\textwidth}
\begin{tabular}{ c | c | c | c |c | c | c| c|c}
  & \lmtd trk. err. & \lmtd est. err. & B1 trk. err. & B1 est. err. & B2 trk. err. & B2 est. err.& B3 trk. err.& B3 est. err.\\\hline
  
  Car & 0.175 $\pm$ 0.117  & 0.032 $\pm$ 0.022 & 17.49 $\pm$ 79.86 & 143.4 $\pm$ 1202 & 1.520 $\pm$ 6.306 & 3.597 $\pm$ 19.90 & --- & ---\\
  \cellcolor{lightgray!50!}Quad & \cellcolor{lightgray!50!}0.151 $\pm$ 0.187 & \cellcolor{lightgray!50!}0.029 $\pm$ 0.028 & \cellcolor{lightgray!50!}39.30 $\pm$ 142.1 & \cellcolor{lightgray!50!}52.64 $\pm$ 185.9 & \cellcolor{lightgray!50!}40.56 $\pm$ 302.1 & \cellcolor{lightgray!50!}63.53 $\pm$ 424.1 & \cellcolor{lightgray!50!}--- &\cellcolor{lightgray!50!} ---\\
  Arm & 2.0e-4 $\pm$ 1.3e-5 & 0.053 $\pm$ 0.039 & 2.0e-4 $\pm$ 1.4e-5 & 0.145 $\pm$ 0.239 & --- & ---  &0.000 $\pm$ 0.000 &0.316 $\pm$ 0.249 \\
\end{tabular}
\end{adjustbox}\vspace{5pt}
\caption{Statistics on the tracking/estimation error reduction across all experimental results. ``Trk. err.'' $ = \Vert x^*(T) - x(T) \Vert/\Vert x^*(0) - x(0) \Vert $. ``Est. err.'' $ = \Vert \xhat(T) - x(T) \Vert/\Vert \xhat(0) - x(0) \Vert $. In each cell: average error $\pm$ standard deviation over all trials.}\vspace{-12pt}
\label{table:stats}
\end{table*}

\begin{figure}
\includegraphics[width=\textwidth]{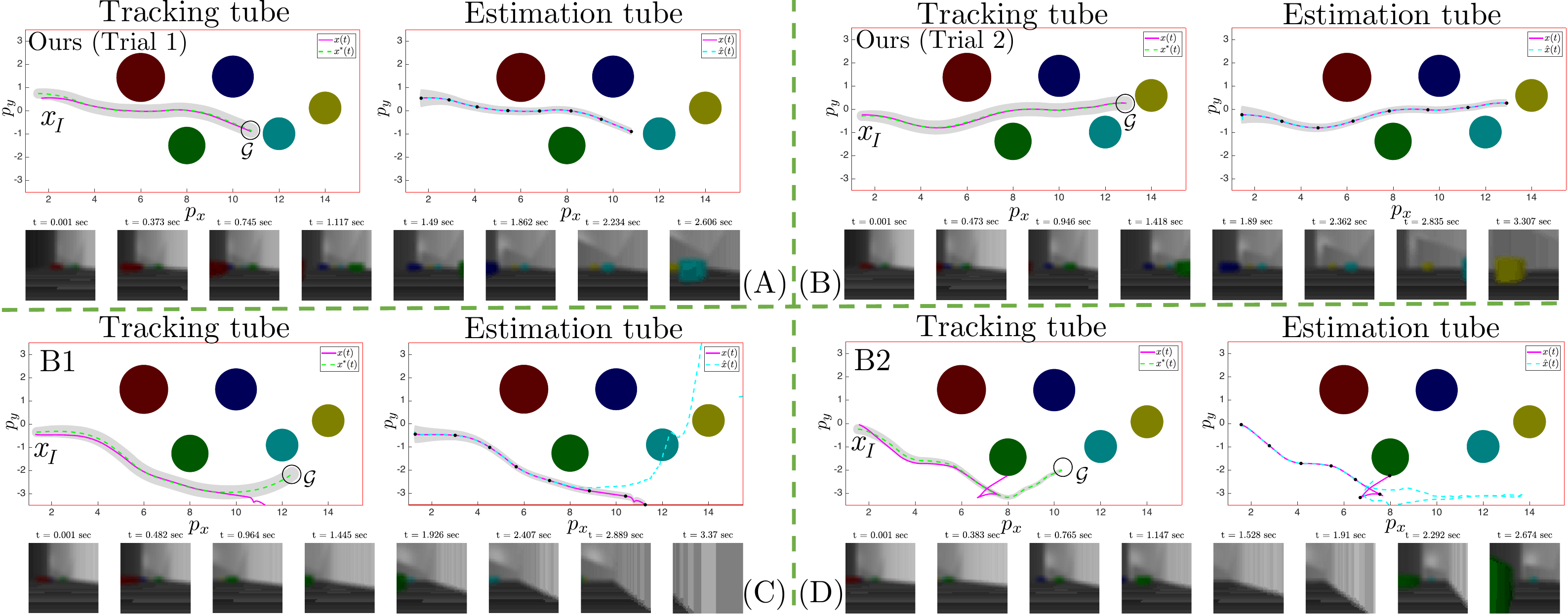}
\caption{4D car. Planned, executed, and estimated trajectories, overlaid with corresponding tracking and estimation tubes $\Omega_c(t)$ and $\Omega_e(t)$. For eight timesteps corresponding to the black dots on the $\Omega_e$ plot, we also show RGB component of the observations seen at runtime (bottom). A) and B): two examples of \lmtds, which safely reach the goal. C) and D): B1 and B2: both crash.} \label{fig:car}
\end{figure}

\subsubsection{4D nonholonomic car}
We consider a ground vehicle in an obstacle field (Fig. \ref{fig:environments}.A), governed by \eqref{eq:car}. The observations are given by 48x48 RGB-D images taken from a front-facing onboard camera (Fig. \ref{fig:environments}.A, inset); this makes $y \in \mathbb{R}^{9216}$. Three states can be directly inferred from a single image: $p_x$, $p_y$, and $\phi$. For this example, $\theta \in \mathbb{R}^5$ parameterizes the $p_y$-translation of each of the five obstacles. We are given $\Ndata = 250000$ datapoints to train the perception system $\hinv$, sampled uniformly from $\Cred D_p = [0, 13.5]\times[-2.5, -2.5]\times[-\pi/3,\pi/3]$ and $D_\theta = [0.5,1.5]\times[-1.5,-0.5]\times[0.5,1.5]\times[-1,0]\times[0,1]$. We model $\hinv$ as a fully-connected neural network, with five hidden layers of width 1024 and softplus activations. We use the method of Sec. \ref{sec:method_ccms_and_ocms} to obtain a constant CCM $\Mc$ with $\maxeig{\Mc} = 1$, $\mineig{\Mc} = 0.07$, and $\lambda_c = 2.5$, and a constant OCM $\Mo$ with $\maxeig{\Wo} = 5.44$, $\mineig{\Wo} = 0.05$, and $\lambda_o = 0.6$, where $D_c = (-\infty, \infty)^2 \times [-\pi/3, \pi/3] \times [2, 5] = D_e$. To compute our tubes in \lmtds, we use $\bar\err_3(x^*, \theta)$, since for this example $\Omega_c$ may be large. The constants are estimated to be $L_{\Delta k} = 3.28$, $L_{\hinv} = 0.05$, $L_p = 0.024$, and $\disp=0.69$. In computing our tubes, we assume $\Vert \dx \Vert \le 0.05$, $\bar\distc(0) = 0.2$, $\bar\diste(0) = 0.1$, and $\dy \in \mathbb{R}^{n_y}$ satisfies $\Vert \dy \Vert \le 0.25$. To simulate noisy depth images, $B_y$ is set to be a diagonal $n_y \times n_y$ matrix, with $0$ diagonal entries for RGB indices and $1$ for the depth indices.

We plan for 150 start/goals in $D$; our unoptimized implementation takes 2.5 minutes on average. This is done offline; the tracking controller is computed at real-time rates following Sec. \ref{sec:bound_trk_err} and \cite{sumeet_icra}. For each trial, the obstacle map $\theta$ is selected uniformly within $D_\theta$. See Table \ref{table:stats} for error statistics. Over all trials, our method ensures $x(t)$ and $x^*(t)$ always remain within the \lmtds-computed $\Omega_c(t)$ and $\Omega_e(t)$, respectively, and reduces the initial tracking/estimation error by a factor of $>5$ and $30$, respectively. In contrast, B1 violates its $\Omega_c(t)$ and $\Omega_e(t)$ in 90/150 and 101/150 trials, respectively, fails to reduce tracking/estimation error, and can crash. For instance, in Fig. \ref{fig:car}.C, the plan leaves $D_r$, causing observation error to increase (here, $\hat h$ is inaccurate, since it is not trained outside of $D_r$), destabilizing $\xhat$ (Fig. \ref{fig:car}.C, right), in turn destabilizing $x$, leading to the crash. Similarly, B2 violates its computed $\Omega_c$ in 60/150 trials (no $\Omega_e(t)$ is computed for B2, as it assumes perfect state information), fails to shrink tracking/estimation errors, leading to crashes (see Fig. \ref{fig:car}). As in B1, this crash also arises from observation error. Overall, this experiment suggests that \lmtd ensures safe goal-reaching for nonholonomic systems using RGB-D data, and that it generalizes to different environments (i.e., obstacle layouts), while baselines are unsafe.

\begin{figure}[t]
\includegraphics[width=\textwidth]{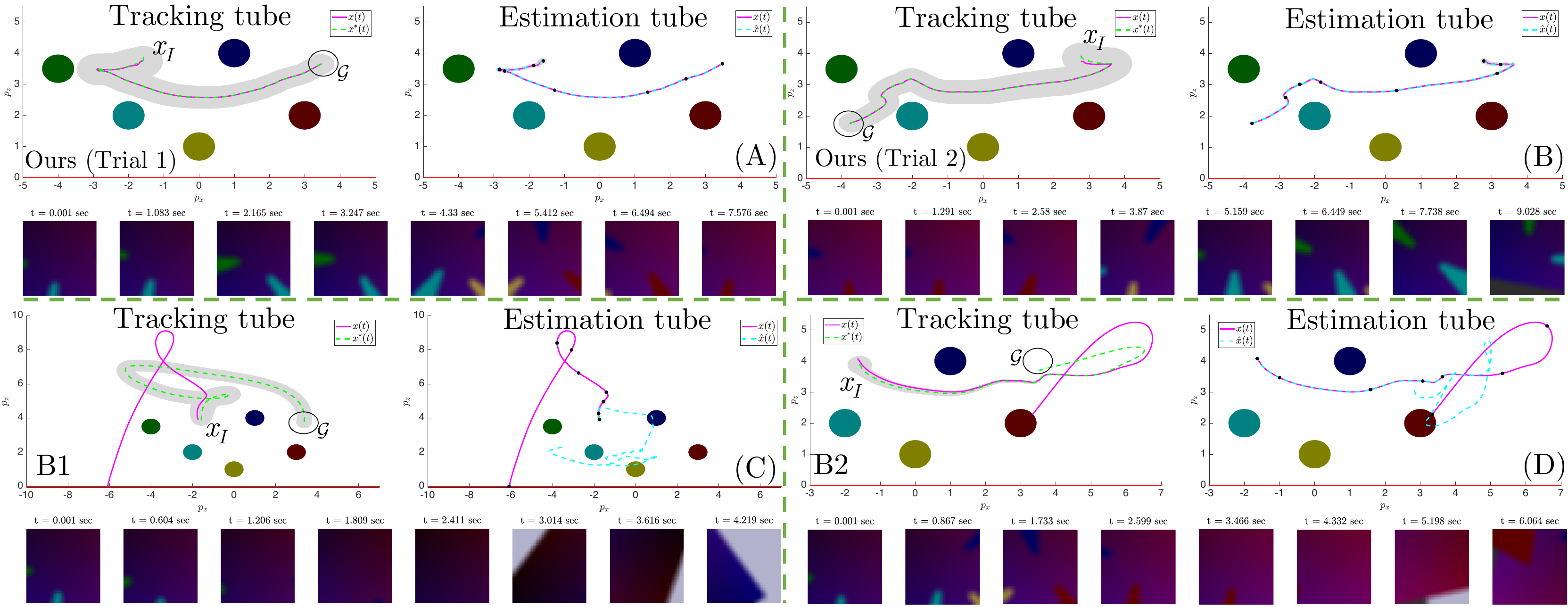}
\caption{6D quadrotor. Planned, executed, and estimated trajectories, overlaid with $\Omega_c(t)$ and $\Omega_e(t)$. Snapshots of the runtime observations are shown (bottom). A) and B): two examples of \lmtds, which safely reach the goal. C) and D): B1 and B2: both crash.} \label{fig:pvtol}
\end{figure}

\subsubsection{6D quadrotor}

We consider a planar quadrotor in an obstacle field (Fig. \ref{fig:environments}.B), governed by \eqref{eq:pvtol}. The observations are given by 48x48 RGB images taken from a front-facing onboard camera (Fig. \ref{fig:environments}.B, inset); this makes $y \in \mathbb{R}^{6912}$. Three states can be directly inferred from an image: $p_x$, $p_z$, and $\phi$. Here, we consider a single set of map configurations, i.e., $\theta$ is a singleton. We are given $\Ndata = 140000$ datapoints to train $\hinv$, sampled uniformly from $\Cred D_p = [-4.5, 4.5]\times[0.5, 4.5]\times[-\pi/4,\pi/4]$. We model $\hinv$ as a fully-connected neural network, with five hidden layers of width 1024 and ReLU activations. Using the method of Sec. \ref{sec:method_ccms_and_ocms}, we obtain a polynomial CCM $\Mc$ with $\maxeigc{\Mc} = 6.55$, $\mineigc{\Mc} = 0.22$, and $\lambda_c = 0.8$, and a constant OCM $\Mo$ with $\maxeig{\Wo} = 8.13$, $\mineig{\Wo} = 0.1$, and $\lambda_e = 0.7$, where $D_c = (-\infty, \infty)^2 \times [-\pi/3, \pi/3] \times [-4.5, 4.5] \times [-1, 1] \times [-2, 2]$ and $D_e = (-\infty, \infty)^2 \times [-\pi/4, \pi/4] \times [-5, 5] \times [-2.5, 2.5] \times [-2.5, 2.5]$. To update our tracking tubes in \lmtds, we found it sufficient to use the first error bound $\bar\err_1$, which we estimate to be $0.008$, and $L_{\Delta k} = 3.6$. In computing our tubes, we assume $\Vert \dx \Vert \le 0.0125$, $\bar\distc(0) = 0.15$, $\bar\diste(0) = 0.1$, and noiseless images $\Vert \dy \Vert = 0$.

We plan for 150 start/goals in $D$, taking 6 minutes on average (see Table \ref{table:stats} for statistics). Across all trials, \lmtd ensures $x(t)$ and $\xhat(t)$ stay inside the \lmtds-computed tubes $\Omega_c(t)$ and $\Omega_e(t)$, respectively, and reduces the initial tracking/estimation error by a factor of $>6$ and $34$. In contrast, B1 violates its computed $\Omega_c(t)$ and $\Omega_e(t)$ in 61/150 and 76/150 trials, respectively, fails to reduce error, and can be unsafe (see Fig. \ref{fig:pvtol}). Similarly, B2 violates its $\Omega_c$ in 142/150 trials. We show concrete examples of this in Fig. \ref{fig:pvtol}.C-.D; the plans in both cases exit $D_r$, moving to $p_x$ and $p_z$ values outside of the $[-4.5,4.5]\times[0.5,4.5]$ training range, leading to high $\hinv$ error. The plans also take overly-aggressive turns that bring the velocities outside of $D_e$ and $D_c$; this further destabilizes the system, causing crashes in both cases. Overall, this experiment suggests the need to ensure that $\hinv$, the CCM, and the OCM are correct, and that \lmtd ensures this to guarantee safety for underactuated systems via RGB observations.

\begin{figure}
\includegraphics[width=0.95\textwidth]{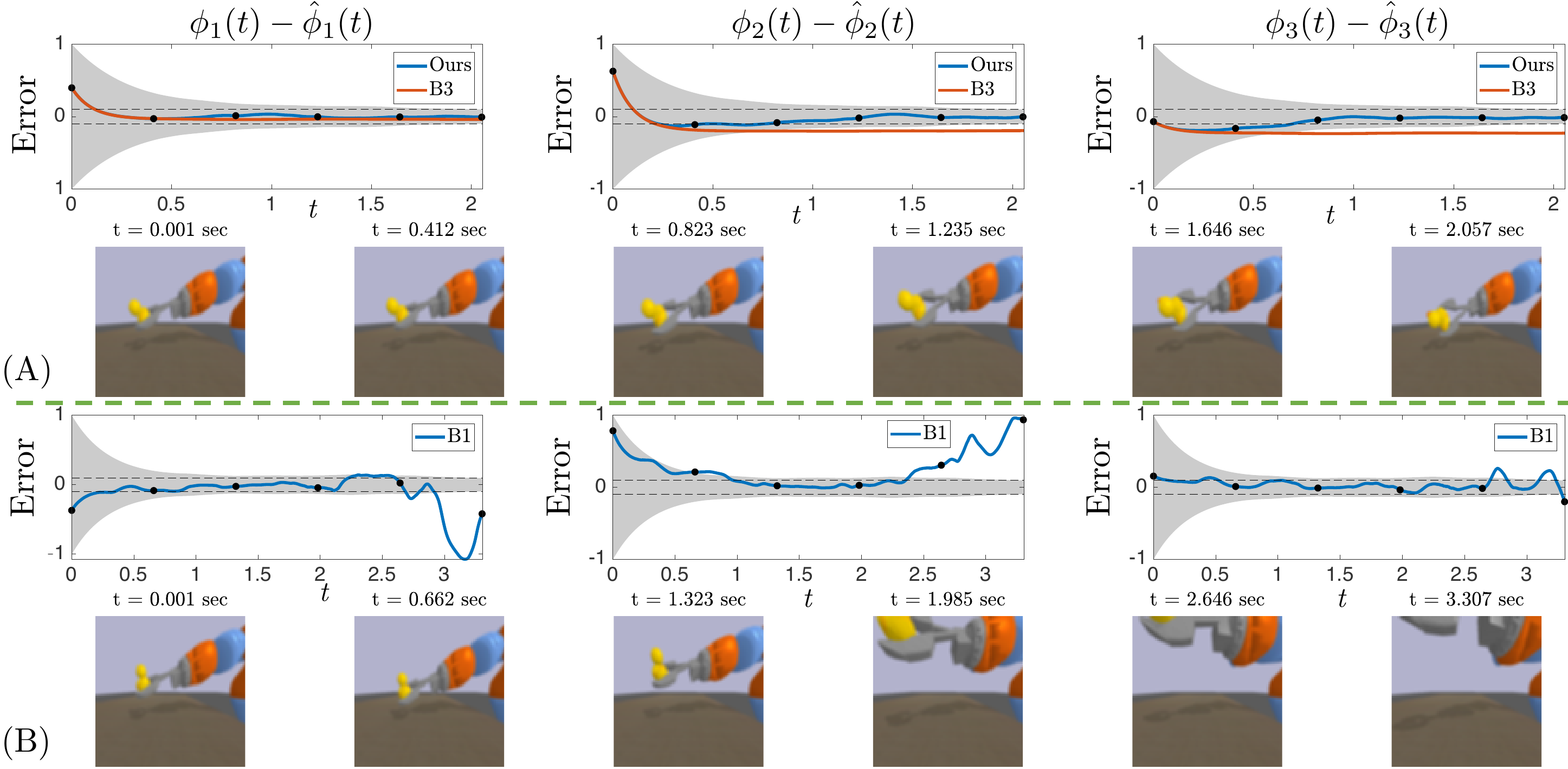}
\caption{7DOF arm. State estimate error, overlaid with $\Omega_e(t)$ (in gray). Runtime observations are shown (bottom). A): when using \lmtds, the state estimate error remains in $\Omega_e(t)$ and achieves $|\hat\phi_i(T) - \phi_i(T)| \le 0.1$. B3 fails to meet this requirement. B) B1 also fails the 0.1 requirement.} \label{fig:arm}
\end{figure}

\subsubsection{17D manipulation task} We consider an acceleration-controlled 7DOF Kuka arm, where each joint follows double integrator dynamics \eqref{eq:arm_14d}, which is grasping an object (a rubber duck) with an unknown orientation relative to the end effector. We assume slight noise in the dynamics \eqref{eq:arm_14d}, $\bardx = 0.0125$, due to the weight of the object. Our goal is to estimate the unknown orientation, represented as three Euler angles $\{\phi_i\}_{i=1}^3$, using our observer \eqref{eq:observer_reduced}, given 80x80 RGB images (Fig. \ref{fig:environments}.C) of the arm and grasped object (see Fig. \ref{fig:environments}.C, inset); this makes $y \in \mathbb{R}^{19200}$. We may also plan motions for the arm to improve the quality of the observations/state estimates, though in doing so, we also need to counteract the dynamics error. Our overall goal is to guarantee our final estimate of the relative orientation satisfies $| \phi_i(T) - \hat\phi_i(T) | \le 0.1$, $i = 1,2,3$. 

We assume that the joint angles and velocities can be perfectly estimated (i.e., directly measured), given the accuracy of the Kuka joint encoders, focusing instead on estimating the unknown $\{\phi_i\}_{i=1}^3$ and controlling $j$ and $\dot j$ (the joint angles and velocities) using our method. We assume the object is rigidly attached to the gripper, such that its relative orientation is constant over time. Combining $\{\phi_i\}_{i=1}^3$ and the 14D model, the full state of the system is 17D \eqref{eq:arm_17d}, i.e., $x = [\phi_1, \phi_2, \phi_3, j_1, \ldots, j_7, \dot j_1, \ldots \dot j_7]^\top$. To train $\hinv$, we note that $\{\phi_i\}_{i=1}^3$ can all be estimated from the image. For this example, since $j$ is known and affects the generated $y$, we design $\hinv$ to take as input $y \in \mathbb{R}^{19200}$ and $j \in \mathbb{R}^7$ (i.e., $j$ plays the role of $\theta$) and to output $\{\phi_i\}_{i=1}^3$. We are given $\Ndata = 62500$ datapoints to train $\hinv$, where $\{\phi_i\}_{i=1}^3$ are sampled uniformly from $[-\pi/3,\pi/3]^3$ and $j$ is sampled uniformly from $[-0.05, 0] \times [0, 0.05] \times [0.15, 0.32] \times [-1.83, -1.69] \times [-0.05, 0.05]^2 \times [-\pi/3, \pi/3]$. We model $\hinv$ as a fully-connected neural network, with five hidden layers of width 1024 and softplus activations. We compute a constant CCM for the 14D subsystem: CCM synthesis for the full 17D system fails, as the $\{\phi_i\}_{i=1}^3$ are not controllable due to the rigid attachment. Since the arm dynamics are linear, the CCM optimization simplifies to a standard semidefinite program that can be quickly solved. We compute a constant OCM for the full 17D system, to enable estimation of $\{\phi_i\}_{i=1}^3$. Using the method of Sec. \ref{sec:method_ccms_and_ocms}, we obtain a CCM $\Mc$ with $\maxeig{\Mc} = 100$, $\mineig{\Mc} = 2.81$, and $\lambda_c = 2.89$, and a constant OCM $\Mo$ with $\maxeig{\Wo} = \mineig{\Wo} = 0.1$, and $\lambda_e = 9.5$. As the dynamics are linear, a constant CCM/OCM holds globally, i.e., $D_e = D_c = \X$. To update the tubes in \lmtds, we use $\bar\err_2(x^*,j)$, where we estimate $L_p = 2.45$. Since $j$ and $\dot j$ are known, no error arises from incorrect state estimates; thus, $L_{\Delta k}$ does not need to be estimated. We assume $\bar\distc(0) = 10^{-3}$, $\bar\diste(0) = 0.32$, and noiseless images $\Vert \dy \Vert = 0$.

We plan 100 trajectories in $D$ from various initial $j$, $\dot j$, and orientation estimates, taking 45 seconds on average. We summarize the error statistics in Table \ref{table:stats}. Across all trials, when planning with \lmtds, $x$ and $\xhat$ always remain within the computed tubes $\Omega_c(t)$ and $\Omega_e(t)$; the CCM keeps the tracking error very small, and the OCM shrinks the error by a factor of $>18$. Crucially, if a plan is found where $\Omega_e(T)$ satisfies the estimation accuracy threshold, we can ensure our true state estimate satisfies $| \phi_i(T) - \hat\phi_i(T) | \le 0.1$, $i = 1,2,3$. We are able to find plans that achieve this threshold for 100/100 trials. We compare with two baselines in this example: B1 (as described before), and B3, which keeps the arm stationary and runs \eqref{eq:observer_reduced} for the same duration as the plan computed using \lmtds. The purpose of B3 is to show that the actions taken by the \lmtd plan help to reduce estimation error. In contrast to \lmtds, B1 violates its computed $\Omega_e(t)$ in 44/100 trials and can fail to achieve the required estimation accuracy, only satisfying the 0.1 threshold in 79/100 trials (see Fig. \ref{fig:arm}). One failure example is shown in Fig. \ref{fig:arm}.B: the arm moves too close to the camera (outside of $D_r$), causing the duck to fall out of frame. This causes a sharp increase in $\hinv$ error, since $\phi_i$ cannot be observed; this destabilizes \eqref{eq:observer_reduced}, leading to a failure to satisfy the 0.1 threshold. Note that B1 does not violate $\Omega_c$; this is because the controller is not a function of the incorrect $\phi_i$ estimates. Similarly, B3 often fails to satisfy the 0.1-estimation accuracy threshold, only satisfying it in 7/100 trials (see Fig. \ref{fig:arm}.A for a failure example). This shows that passively estimating $\phi_i$ without moving the arm cannot achieve the needed estimation accuracy; instead, the arm must be moved towards regions with smaller perception error. Overall, this experiment suggests the applicability of our approach on high-dimensional systems, that it can design actions that improve state estimates, and that our approach can plan paths that guarantee a desired level of state estimation accuracy.

\section{Discussion and Conclusion}\label{sec:conclusion}

We present a motion planning algorithm for control-affine systems that enables safe tracking at runtime using an output feedback controller with image observations as input. To achieve this, we learn a perception system and use it in an OCM and CCM-based output feedback control loop. We derive tracking tubes for the closed-loop system and use them within an RRT-based planner to compute plans that theoretically guarantee safe goal-reaching at runtime. Our results empirically validate this safety guarantee, and show that ignoring the effects of state estimation error and the local validity of the perception system/estimator/controller can lead to unsafe behavior.

Our method has some weaknesses which reveal directions for future work. While the large dataset $\S$ used to train $\hinv$ is easy to gather in simulation, sim-to-real is then needed for $\hinv$ to transfer to the real world. Thus, in future work, we will combine synthetic, domain-randomized perception data with a small real-world labeled dataset to train generalizable perception modules that have calibrated estimates of the sim-to-real error. Our method also assumes noiseless training data, to ensure $L_p$ is finite; in the future, we wish to relax this by investigating Lipschitz constant estimation methods robust to input noise \cite{calliess2014conservative} . Another drawback is the conservativeness of using worst-case disturbances; to mitigate this, we will integrate stochastic contraction \cite{DBLP:journals/corr/abs-2106-05635} into our method. Finally, we require $\theta$ to be known; in future work, we will aim to jointly estimate $\theta$ and $x$ with similar convergence guarantees.

\bibliographystyle{splncs04}
\bibliography{references}

\newpage
\appendix
\renewcommand{\theequation}{\thesection.\arabic{equation}}
\setcounter{equation}{0}

\begin{minipage}{\linewidth}
\centering{\huge{\textbf{Appendix}}}
\end{minipage}

\section{Trusted domain visualizations}\label{sec:app_viz}

\begin{figure}
\includegraphics[width=0.9\textwidth]{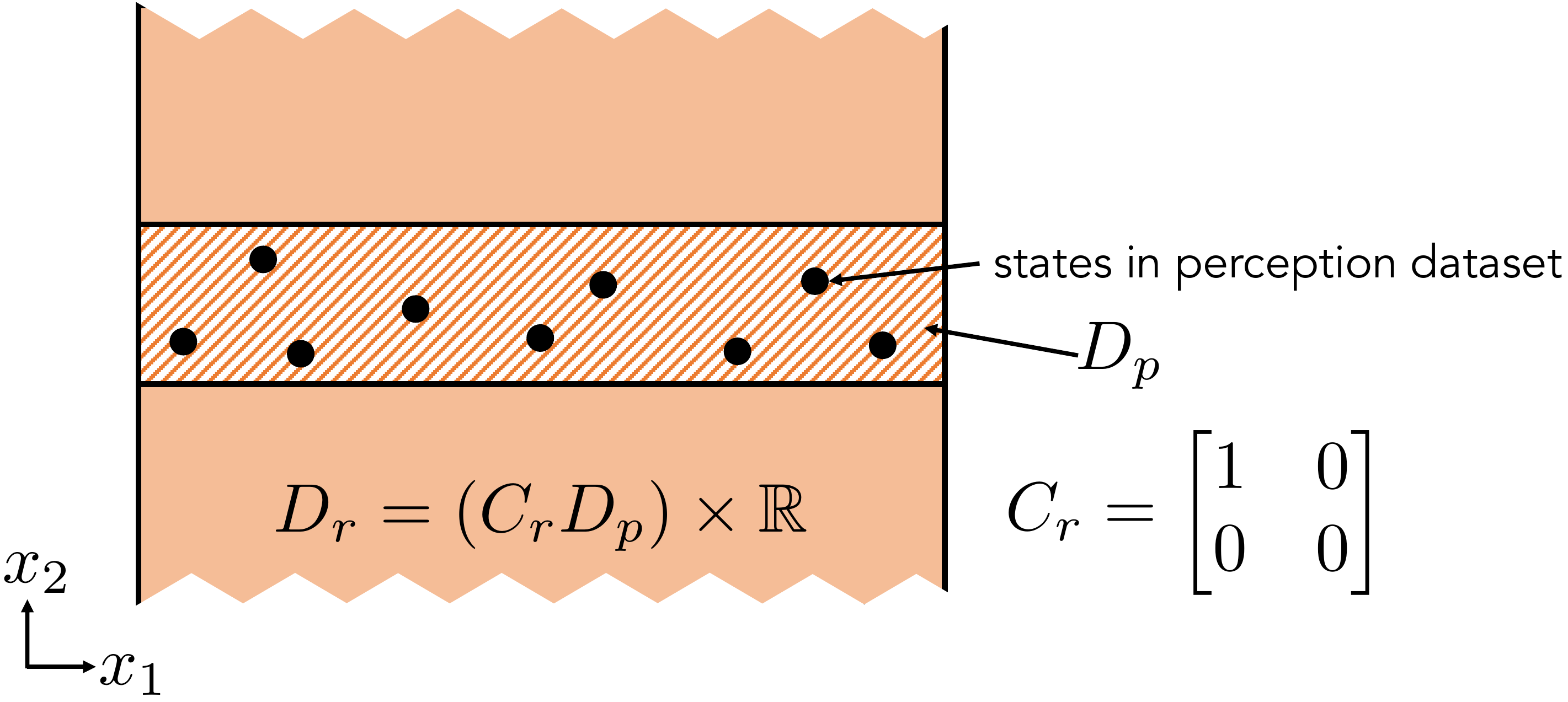}
\vspace{-8pt}
\caption{An example of how $D_r \subseteq \X$ is constructed.} \label{fig:dr}
\end{figure}

In Fig. \ref{fig:dr}, we visualize for a toy example how we construct $D_r$, which is a critical set upon which we define the trusted domain $D_x$.

\section{Bounding estimation error (expanded)}\label{sec:app_bounds}

How can we bound the learned perception module error $\err(x,\theta) \doteq \Vert \hinv\big(\h(x, \theta), \theta\big) - \Cred x \Vert$ over $D_r \times D_\theta$? We describe three options, each with their own strengths/drawbacks. The first and simplest option is to estimate a \textit{uniform bound} $\bar\err_1$ on the error:
\begin{equation}\label{eq:err_bnd_1}
	\err(x, \theta) \le \bar \err_1,\quad \forall (x,\theta) \in D_r \times D_\theta.
\end{equation}
This works well if the error is consistent across $D_r \times D_\theta$; however, it will be conservative if there are any error spikes.
A second option is to derive a \textit{spatially-varying bound} on $\err(x,\theta)$. To do so, we first estimate the Lipschitz constant of $\err$, $L_p$, over $D_r \times D_\theta$:
\begin{equation}
	\Vert \err(\tilde x, \tilde \theta) - \err(\check x, \check \theta) \Vert \le L_p \Vert (\tilde x, \tilde \theta) - (\check x, \check \theta) \Vert,\quad \forall (\tilde x, \tilde \theta), (\check x, \check \theta) \in D_r \times D_\theta.
\end{equation}
Denote $\S_{x\theta} = \{(x_i,\theta_i)\}_{i=1}^{\Ndata}$. We can then write this bound $\bar\err_2(x^*, \theta)$  (cf. Fig. \ref{fig:bounds}.B for visuals), which crucially is an explicit function of the plan $x^*$, and can thus directly guide our planner Alg. \ref{alg:rrt}:
\begin{theorem}[$\bar\err_2(x^*, \theta)$]: Recall that $\distc(t) = \distc(x^*(t), x(t))$ is the Riemannian distance between the nominal and true state at some time $t$. An upper bound on the perception error $\err(x,\theta)$ that scales linearly with $\distc(t)$ can be written as:
	\begin{equation}\label{eq:bnd2}\small
				\err(x,\theta) \le L_p \distc/\sqrt{\mineigc{\Mc}} + \textstyle\min_{1 \le i \le \Ndata}\{L_p (\Vert (x^*,\theta) - (x_i,\theta_i) \Vert) + \err_i \} \doteq \bar\err_2(x^*,\theta)
	\end{equation}
\end{theorem}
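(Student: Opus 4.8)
The plan is to obtain the bound through two applications of the local Lipschitz property of $\err$ together with one conversion between Euclidean and Riemannian distance. The key observation is that the perception error at the (unknown) true state $x$ can be tied to its value at the nominal state $x^*$, and that $\err(x^*,\theta)$ can in turn be pinned down by the \emph{known} training errors $\err_i$ at nearby data points, so that the whole bound becomes an explicit function of $x^*$ usable in the planner.

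First I would split the error at $x$ using Lipschitzness of $\err$ with constant $L_p$ over $D_r\times D_\theta$. Since $x$ and $x^*$ are evaluated at the same parameter $\theta$, the joint distance collapses to $\Vert x-x^*\Vert$, giving
\begin{equation}
	\err(x,\theta) \le \err(x^*,\theta) + L_p \Vert x - x^*\Vert.
\end{equation}
Second, I would convert $\Vert x-x^*\Vert$ into the Riemannian distance $\distc$. Because the CCM satisfies $\Mc(z)\succeq\mineigc{\Mc} I$ for every $z\in D_c$, the length of any curve measured w.r.t. $\Mc$ dominates $\sqrt{\mineigc{\Mc}}$ times its Euclidean length; taking the infimum over curves joining $x^*$ and $x$ yields $\distc \ge \sqrt{\mineigc{\Mc}}\,\Vert x-x^*\Vert$, i.e. $\Vert x-x^*\Vert \le \distc/\sqrt{\mineigc{\Mc}}$. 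Third, I would bound the residual $\err(x^*,\theta)$ by any single data point $(x_i,\theta_i)$ via Lipschitzness, $\err(x^*,\theta)\le \err_i + L_p\Vert (x^*,\theta)-(x_i,\theta_i)\Vert$, and since this holds for every $i$, take the minimum over all $\Ndata$ points. Substituting the second and third steps into the first reproduces exactly the claimed $\bar\err_2(x^*,\theta)$.

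The routine inequalities are easy; the main subtlety is ensuring every use of $L_p$ and of the eigenvalue bound is legitimate on the correct domain. The Lipschitz bound for $\err$ holds only for pairs inside $D_r\times D_\theta$, so I must invoke that $x$, $x^*$, and the data points all lie in $D_r$ with $\theta\in D_\theta$ — which is precisely what the planner enforces by keeping the tracking tube $\Omega_c$ inside $D_c\cap D_r$ (line 9 of Alg.~\ref{alg:rrt}). Likewise, the metric-to-Euclidean conversion requires the relevant geodesic between $x^*$ and $x$ to remain in $D_c$, where the uniform lower bound $\mineigc{\Mc}$ is valid. Getting the direction of this last inequality right — the Riemannian distance dominates the scaled Euclidean distance because the metric is bounded \emph{below} — is the one genuinely geometric point in the argument and the place where a sign error would be easiest to make.
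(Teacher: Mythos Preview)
Your proof is correct and follows essentially the same route as the paper: Lipschitz continuity of $\err$ to pass through $x^*$ and the training points, together with the metric comparison $\sqrt{\mineigc{\Mc}}\,\Vert x-x^*\Vert \le \distc$. The only cosmetic difference is that the paper applies Lipschitz once from $(x,\theta)$ directly to $(x_i,\theta_i)$ and then inserts $x^*$ via the triangle inequality on the norm, whereas you apply Lipschitz twice (first to $(x^*,\theta)$, then to $(x_i,\theta_i)$); both orderings yield the identical bound.
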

\begin{proof}
	For some training point $(x_i, \theta_i) \in (\S_{x\theta})$, we can calculate $\err_i \doteq \Vert \hinv(h(x_i,\theta_i),\theta_i) - \Cred x_i\Vert$ as its training error. Using $\err_i$ and $L_p$, we can bound the error at a query $(x, \theta) \in D_r \times D_\theta$, as $\err(x, \theta) \le L_p \Vert (x,\theta) - (x_i,\theta_i) \Vert + \err_i$. This holds for all data, so we can tighten the bound by taking the pointwise minimum of the bounds over all datapoints:
\begin{equation}\label{eq:lipschitz_err}
	\err(x,\theta) \le \textstyle\min_{1 \le i \le \Ndata}\{L_p \Vert (x,\theta) - (x_i,\theta_i) \Vert + \err_i \}
\end{equation}
As written, \eqref{eq:lipschitz_err} is only implicitly a function of the plan, via the relationship between $x$ and $x^*$, and cannot directly guide planning, since $x$ is unknown at planning time. We can make this bound an explicit function of the plan $x^*$ by rewriting it as follows:
\begin{equation}\small
	\begin{array}{ll}
		\err(x,\theta) &\le \textstyle\min_{1 \le i \le \Ndata}\{L_p \Vert (x,\theta) - (x_i,\theta_i) \Vert + \err_i\} \\
		&\le \textstyle\min_{1 \le i \le \Ndata}\{L_p (\Vert (x,\theta) - (x^*,\theta) \Vert + \Vert (x^*,\theta) - (x_i,\theta_i) \Vert) + \err_i \} \\
		&\le \frac{L_p \distc}{\sqrt{\mineigc{\Mc}}} + \textstyle\min_{1 \le i \le \Ndata}\{L_p (\Vert (x^*,\theta) - (x_i,\theta_i) \Vert) + \err_i \} \doteq \bar\err_2(x^*,\theta) \\
	\end{array}\hspace{-20pt}
\end{equation}
The second inequality follows from the triangle inequality, and the third inequality by applying $\sqrt{\mineigc{\Mc}}\Vert x_1 - x_2 \Vert \le \distc(x_1, x_2) \le \sqrt{\maxeigc{\Mc}}\Vert x_1 - x_2\Vert$.
\end{proof}

Instead of bounding the error over the whole domain (as in $\bar\err_1$), $\bar\err_2(x^*,\theta)$ is tighter as it only bounds the error over the tube, $\Omega_c(t)$. However, due to the leading term in \eqref{eq:bnd2}, $\bar\err_2(x^*,\theta)$ scales linearly with $\distc$, even if the true error $\err(x,\theta)$ is relatively constant, making it loose for large $\distc$. Thus, we derive a third error bound to mitigate this, $\bar\err_3(x^*,\theta)$ (cf. Fig. \ref{fig:bounds}.C). Overall, this third error bound $\bar\err_3(x^*, \theta)$ is useful for large $\distc$, as there is no direct scaling with $\distc$; however, it can be conservative if the data has high dispersion. We describe this in more detail in the following, after some definitions and proving a property of the dispersion. We provide a more detailed visualization of our bound in Fig. \ref{fig:bound_3}.

\vspace{5pt}
\begin{figure}
\includegraphics[width=0.9\textwidth]{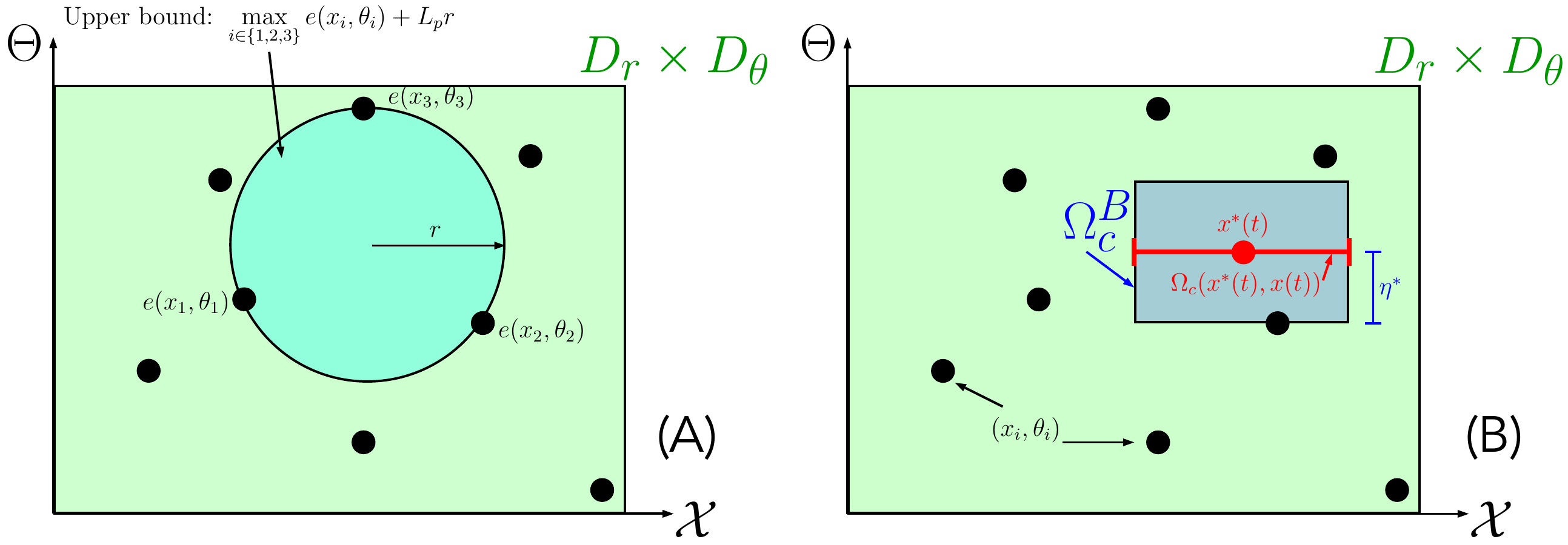}
\caption{(A) Visualization of the dispersion; together with the Lipschitz constant, it can bound the error within the blue set. (B) A visualization of our set construction in $\bar\err_3(x^*, \theta)$. } \label{fig:bound_3}
\end{figure}\vspace{5pt}

Define $\B_{r}(x) = \{\tilde x \mid \Vert\tilde x - x\Vert < r\}$. and the dispersion of a finite set $A$ contained in a set $B$, $\disp(A,B)$: 
\begin{equation}\label{eq:dispersion}
	\begin{array}{lcl}
		\disp(A, B) \doteq & \displaystyle\sup_{r \ge 0, x \in B} & r \\
		&\textrm{s.t.} & \B_r(x) \cap A = \emptyset, \quad \B_r(x) \subseteq B \\
	\end{array}
\end{equation}
i.e., the radius of the largest open ball inside $B$ that does not intersect with $A$. The following property of the dispersion will be useful to us in deriving $\bar\err_3(x^*,\theta)$.

\begin{lemma}[Dispersion of a subset]\label{lem:dispersion}
	Consider a finite set $X$, a compact set $Y$, where $X \subseteq Y$, and a compact subset $Z \subseteq Y$. Then, $\disp(X\cap Z, Y \cap Z) \le \disp(X, Y)$.
\end{lemma}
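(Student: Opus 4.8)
The plan is to show that the feasible set of the optimization defining $\disp(X\cap Z, Y\cap Z)$ embeds into the feasible set of the optimization defining $\disp(X, Y)$ while preserving the objective value $r$; the claimed inequality then follows at once by taking suprema. Concretely, I would fix an arbitrary feasible pair $(r, x)$ for the $\disp(X\cap Z, Y\cap Z)$ problem, i.e. a pair satisfying $x \in Y\cap Z$, $\B_r(x) \subseteq Y \cap Z$, and $\B_r(x) \cap (X\cap Z) = \emptyset$, and argue that this exact pair is also feasible for the $\disp(X, Y)$ problem.

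Two of the three constraints are immediate: $x \in Y\cap Z \subseteq Y$, and $\B_r(x) \subseteq Y\cap Z \subseteq Y$. The crux of the lemma is the remaining constraint, namely upgrading ``the ball avoids $X\cap Z$'' to ``the ball avoids all of $X$.'' Here I would exploit the containment $\B_r(x) \subseteq Z$, which comes for free from $\B_r(x)\subseteq Y\cap Z$. Arguing by contradiction, suppose some $p \in \B_r(x)\cap X$. Since $p\in\B_r(x)\subseteq Z$, we also have $p \in Z$, hence $p \in X\cap Z$; but then $p \in \B_r(x)\cap(X\cap Z)$, contradicting the feasibility assumption $\B_r(x) \cap (X\cap Z) = \emptyset$. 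Therefore no such $p$ exists and $\B_r(x)\cap X=\emptyset$.

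With all three constraints verified, $(r,x)$ is feasible for the $\disp(X,Y)$ problem, so every objective value attainable by the $(X\cap Z, Y\cap Z)$ problem is attainable by the $(X,Y)$ problem. Taking the supremum over each feasible set yields $\disp(X\cap Z, Y\cap Z) \le \disp(X,Y)$. I do not expect a genuine obstacle: because the argument proceeds purely by feasible-set inclusion, the supremum requires no $\varepsilon$-approximation, and the finiteness/compactness hypotheses on $X$, $Y$, $Z$ are not needed for this direction (they presumably guarantee the dispersions are well-defined and finite elsewhere in the development). The only point to state with care is the role of the constraint $\B_r(x)\subseteq Y\cap Z$ in confining the ball to $Z$, since it is precisely this confinement that makes avoiding $X\cap Z$ equivalent to avoiding $X$ for the balls under consideration.
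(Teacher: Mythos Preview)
Your proposal is correct and follows essentially the same approach as the paper: both arguments establish the inequality by showing feasible-set inclusion, with the key observation being that $\B_r(x)\subseteq Y\cap Z\subseteq Z$ forces any point of $X$ in the ball to lie in $X\cap Z$, so avoiding $X\cap Z$ already means avoiding $X$. The only cosmetic difference is that the paper factors the argument through a fixed-center sub-problem $\disp_x(A,B)$ and then takes the supremum over centers, whereas you treat the pair $(r,x)$ jointly; the logical content is identical.
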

\begin{proof}
	Consider a ``sub-problem'' of \eqref{eq:dispersion}, which finds the largest open ball satisfying the constraints of \eqref{eq:dispersion} when the ball center is fixed to $x \in Z$:
	\begin{equation}\label{eq:dispersion_fixedx}
	\begin{array}{lcl}
		\disp_x(A, B) \doteq & \displaystyle\sup_{r \ge 0} & r \\
		&\textrm{s.t.} & \B_r(x) \cap B = \emptyset, \quad \B_r(x) \subseteq A \\
	\end{array}
\end{equation}
Compare the value of $\disp_x(X, Y)$ and $\disp_x(X \cap Z, Y \cap Z)$ for any fixed $x \in Z$. Note that the feasible set of \eqref{eq:dispersion_fixedx} when $A = X \cap Z$ and $B = Y \cap Z$ is contained within the feasible set of \eqref{eq:dispersion_fixedx} when $A = X$ and $B = Y$: any ball which is contained in $Y \cap Z$ and does not intersect $X \cap Z$ is also contained in $Y$, and does not intersect $X$. Thus, $\disp_x(X \cap Z, Y \cap Z) \le \disp_x(X, Z)$.

Now, consider the original problem \eqref{eq:dispersion}, which can be rewritten as $\disp(A, B) = \sup_{x \in B} \disp_x(A, B)$. We have that $\disp(X, Y) \ge \disp(X \cap Z, Y \cap Z)$, since $Y$ is a superset of $Y \cap Z$ and thus the feasible set when $B = Y$ is larger than when $B = Y \cap Z$.

\end{proof}

As an abuse of notation, we refer to $\disp$ without any arguments as shorthand for $\disp(\S_{x\theta}, D_r \times D_\theta)$. Let $\eta^*(t) = \min_{\eta \ge 0, \S_{x\theta} \cap (\Omega_c(t) \times \B_{\eta}(\theta)) \ne \emptyset} \eta$ be the radius of the smallest ball around $\theta$, $\B_{\eta^*(t)}(\theta)$, such that $(\Omega_c \times \B_{\eta^*(t)}(\theta)) \cap (D_r \times D_\theta) \doteq \Omega_c^B$ contains some datapoint. Then, we write our third error bound $\bar\err_3(x^*,\theta)$:
\begin{theorem}[$\bar\err_3(x^*, \theta)$]An upper bound on the perception error $\err(x,\theta)$ can be written based on buffering the local training errors:
	\begin{equation}
		\err(x, \theta) \le L_p \disp + \textstyle\max_{(x_i,\theta_i) \in \S_{x\theta} \cap \Omega_c^B} \err(x_i, \theta_i) \doteq \bar\err_3(x^*, \theta)
	\end{equation}
\end{theorem}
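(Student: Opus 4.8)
The plan is to bound $\err(x,\theta)$ at the unknown true state $x$ by relating the pair $(x,\theta)$ to a nearby training sample and then invoking the Lipschitz constant $L_p$, exactly as in the proof of $\bar\err_2(x^*,\theta)$, but replacing the distance-dependent term $L_p\distc/\sqrt{\mineigc{\Mc}}$ by the data-independent buffer $L_p\disp$. The starting observation is that since the planner guarantees $x(t) \in \Omega_c(t) \subseteq D_r$ (line 9 of Alg.~\ref{alg:rrt}) and $\theta \in D_\theta$, the true pair $(x,\theta)$ lies in $\Omega_c(t) \times \{\theta\}$, hence in $\Omega_c^B = (\Omega_c(t)\times\B_{\eta^*(t)}(\theta))\cap(D_r\times D_\theta)$. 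If I can exhibit a sample $(x_i,\theta_i)\in\S_{x\theta}\cap\Omega_c^B$ with $\|(x,\theta)-(x_i,\theta_i)\|\le\disp$, then the triangle inequality together with the Lipschitz bound gives $\err(x,\theta)\le \err(x_i,\theta_i)+L_p\|(x,\theta)-(x_i,\theta_i)\|\le \max_{(x_j,\theta_j)\in\S_{x\theta}\cap\Omega_c^B}\err(x_j,\theta_j)+L_p\disp$, which is precisely $\bar\err_3(x^*,\theta)$.

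The core step is therefore to certify that a sample lying \emph{inside} $\Omega_c^B$ is within distance $\disp$ of $(x,\theta)$. Here I would apply Lemma~\ref{lem:dispersion} with $X=\S_{x\theta}$, $Y=D_r\times D_\theta$, and $Z=\Omega_c^B$: since $\Omega_c^B\subseteq D_r\times D_\theta$ by construction, the lemma yields $\disp(\S_{x\theta}\cap\Omega_c^B,\Omega_c^B)\le\disp(\S_{x\theta},D_r\times D_\theta)=\disp$. The definition of $\eta^*(t)$ as the smallest radius for which $\Omega_c^B$ contains a datapoint guarantees that $\S_{x\theta}\cap\Omega_c^B\ne\emptyset$, so the set over which the maximum is taken is nonempty and the local dispersion is well-defined. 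Unwinding the dispersion definition \eqref{eq:dispersion} then shows that no empty ball of radius exceeding $\disp$ fits inside $\Omega_c^B$, so the nearest sample in $\S_{x\theta}\cap\Omega_c^B$ to the point $(x,\theta)$ cannot be farther than $\disp$, supplying the sample needed above.

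The main obstacle is this last implication: passing from the dispersion bound (phrased in terms of the largest empty ball \emph{contained} in $\Omega_c^B$) to a covering-radius statement for the particular interior point $(x,\theta)$. If $(x,\theta)$ sits near $\partial\Omega_c^B$, the ball of radius equal to its nearest-sample distance may protrude from $\Omega_c^B$, so the empty-ball-containment hypothesis of \eqref{eq:dispersion} is not immediate. I would close this gap either by arguing that the $\B_{\eta^*(t)}(\theta)$-expansion enlarges the set enough that the relevant ball fits, or by restating the dispersion so that it controls the covering radius of $\Omega_c^B$ directly; in the worst case one replaces $\disp$ by this covering radius, which Lemma~\ref{lem:dispersion} still bounds by the global quantity. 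The Lipschitz step and the reduction to the $\bar\err_2$-style argument are routine once this containment subtlety is resolved.
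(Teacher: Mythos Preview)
Your plan is essentially the paper's proof. The paper writes the chain
\[
\err(x,\theta)\le\sup_{\tilde x\in\Omega_c}\err(\tilde x,\theta)\le\sup_{(\tilde x,\tilde\theta)\in\Omega_c^B}\err(\tilde x,\tilde\theta)\le L_p(\Omega_c^B)\,\disp(\S_{x\theta}\cap\Omega_c^B,\Omega_c^B)+\max_{(x_i,\theta_i)\in\S_{x\theta}\cap\Omega_c^B}\err(x_i,\theta_i),
\]
and then relaxes the local Lipschitz constant and the local dispersion to the global $L_p$ and $\disp$ via Lemma~\ref{lem:dispersion}. The only organizational difference is that the paper first passes to the supremum over $\Omega_c^B$ and then bounds that supremum, whereas you work directly with the specific pair $(x,\theta)$; the substantive step---finding a sample in $\S_{x\theta}\cap\Omega_c^B$ within distance controlled by the dispersion and applying $L_p$---is identical.

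The boundary subtlety you isolate is real and is \emph{not} addressed in the paper: the paper simply asserts that the third inequality above ``follows from the definition of dispersion,'' without confronting the fact that the definition \eqref{eq:dispersion} constrains the ball to lie inside $\Omega_c^B$, so a point near $\partial\Omega_c^B$ could in principle have its nearest in-set sample farther than $\disp(\S_{x\theta}\cap\Omega_c^B,\Omega_c^B)$. Your proposal is therefore more careful than the paper's own argument on this point. One caution about your proposed fixes: the covering-radius variant does not obviously inherit the monotonicity of Lemma~\ref{lem:dispersion} (restricting to $Z$ can remove nearby samples and \emph{increase} the covering radius), so if you pursue that route you will need a separate argument rather than a direct appeal to the lemma.
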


\begin{proof}

Finally, with an abuse of notation, denote $L_p(A)$ as the local Lipschitz constant of $\err(x,\theta)$ in $x$ and $\theta$, valid for all $(x,\theta) \in A$. 

	\begin{equation}\small\label{eq:err_bnd_3}
	\begin{array}{ll}
		\err(x, \theta) &\le \sup_{\tilde x \in \Omega_c} \err(\tilde x, \theta) \le \sup_{(\tilde x, \tilde \theta) \in \Omega_c^B} \err(\tilde x, \tilde \theta) \\
		&\le L_p(\Omega_c^B) \disp(\S_{x\theta} \cap \Omega_c^B, \Omega_c^B) + \max_{(x_i,\theta_i) \in \S_{x\theta} \cap \Omega_c^B} \err(x_i, \theta_i) \\
		&\le L_p(D_r \times D_\theta) \disp(\S_{x\theta}, D_r \times D_\theta) + \max_{(x_i,\theta_i) \in \S_{x\theta} \cap \Omega_c^B} \err(x_i, \theta_i) \doteq \bar\err_3(x^*, \theta),\\
	\end{array}\hspace{-20pt}
\end{equation}
where the second inequality follows from the definition of dispersion and the fourth follows from a property of dispersion shown in Lem. \ref{lem:dispersion}.
\end{proof}

Intuitively, $\bar\err_3(x,\theta)$ uses the training errors $e_i$ for points inside $\Omega_c(t)$ and buffers them with the Lipschitz constant of the error and the data density to upper bound the error inside $\Omega_c(t)$ (see Fig. \ref{fig:bounds}). For this to make sense, there must be at least one datapoint in the considered set; however, in general, the external parameter $\theta \in D_\theta$ given as input to the OFMP may not be precisely in the dataset (as the dataset is just a finite sampling of $D_\theta$). There will, however, be datapoints with $\theta_i$ close by $\theta$. Thus, for the maxima and suprema terms to be well-defined, we must buffer $\Omega_c(t)$ in the $\theta$ coordinates until at least one datapoint lies in $\Omega_c(t) \times \B_{\eta}(\theta)$, for some buffer radius $\eta \ge 0$. In the proof, we make the relaxation in the last inequality so that we only have to estimate one Lipschitz constant and one dispersion, instead of needing to calculate them for each $\Omega_c(t)$; this simplifies the estimation of the constants discussed in Rem. \ref{rem:lipschitz_estimation}.

Each of these three bounds on $\err(x,\theta)$ can be plugged into \eqref{eq:dq_derivation} to upper bound the integral in \eqref{eq:re}. As we use constant $\Mo$ and $\rho$ in the results, this simplifies the integral to $\Vert R_e \dq(t) \Vert$; we prove a bound on $\Vert R_e \dq(t) \Vert$ in Lemma \ref{lem:de}.

\section{Proofs}\label{sec:app_proofs}

In this appendix, we present the proofs of the theoretical results in the main body of the paper; for convenience, we have copied the theorem statements here.

\begin{lemma}[$\dot d_c(t)$]
The integral term in \eqref{eq:rc} can be bounded as
	\begin{equation}\label{eq:dc_bound_app}
		\textstyle\int_0^1 \Vert R_c(\gamma_c^t(s)) \dc(t) \Vert ds \le \sqrt{\maxeigc{\Mc}}\bardx + L_{\Delta k} \diste.
	\end{equation}
\end{lemma}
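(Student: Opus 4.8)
The plan is to split the disturbance $\dc(t)$ from \eqref{eq:dc} into its two constituent pieces---the imperfect-state-feedback term $B(u(\xhat,x^*,u^*) - u_\textrm{closest})$ and the dynamics-error term $B_w(t)\dx(t)$---and to bound each piece's contribution to the integral separately by the triangle inequality, since $\Vert R_c(\gamma_c^t(s))\dc(t)\Vert$ is at most the sum of the norms of the two transformed terms.

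For the dynamics-error piece, I would pull out the factor $R_c(\gamma_c^t(s))$ in induced 2-norm, using $\Vert R_c(x)\Vert = \sqrt{\maxeig{R_c(x)^\top R_c(x)}} = \sqrt{\maxeig{\Mc(x)}}$. Because the geodesic $\gamma_c^t(s)$ stays in $D_c$ (the standing hypothesis behind \eqref{eq:rc}), this is bounded by $\sqrt{\maxeigc{\Mc}}$; combined with $\Vert B_w(t)\Vert \le 1$ and $\Vert \dx(t)\Vert \le \bardx$, the integrand becomes a constant in $s$, so integrating over $[0,1]$ yields exactly the first term $\sqrt{\maxeigc{\Mc}}\,\bardx$.

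For the feedback piece, note that by the very definition of $\Delta k$ the integrand $\Vert R_c(\gamma_c^t(s)) B(u(\xhat,x^*,u^*)-u_\textrm{closest})\Vert$ is bounded pointwise in $s$ by its supremum over $s\in[0,1]$, which is $\Delta k(\xhat,x,x^*,u^*)$; since that bound is independent of $s$, the integral is at most $\Delta k(\xhat,x,x^*,u^*)$. The crux is then to show $\Delta k(\xhat,x,x^*,u^*)\le L_{\Delta k}\diste$. This rests on the observation that $\Delta k$ vanishes under perfect state feedback: when $\xhat = x$ we have $u(\xhat,x^*,u^*) = u(x,x^*,u^*)$, which already lies in $\U_\textrm{feas}(x,x^*,u^*)$ (it is the minimum-norm element of that feasible set), so $u_\textrm{closest} = u(x,x^*,u^*)$, the argument of the norm is zero, and hence $\Delta k(x,x,x^*,u^*)=0$. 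Applying the Lipschitz property \eqref{eq:lip} in the first argument with $\xhat_1 = \xhat$, $\xhat_2 = x$, together with $\Delta k \ge 0$, gives $\Delta k(\xhat,x,x^*,u^*) \le L_{\Delta k}\,\diste(x,\xhat) = L_{\Delta k}\diste$. Summing the two bounds produces \eqref{eq:dc_bound_app}.

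The main obstacle is precisely the feedback-term argument: justifying that $\Delta k$ collapses to $L_{\Delta k}\diste$ with no additive constant. This hinges on recognizing that $\Delta k$ is \emph{exactly} zero at $\xhat = x$---a direct consequence of the minimum-norm structure of $u(\cdot,x^*,u^*)$ and the definition of $u_\textrm{closest}$---so that the Lipschitz inequality anchored at $\xhat_2 = x$ controls the entire quantity by the estimation distance alone. One must also keep the pair $(x,\xhat)$ within the region on which $L_{\Delta k}$ is valid (the $\bar c$, $\bar e$ balls of \eqref{eq:lip}), which is exactly the condition the planner enforces and which is invoked in Thm. \ref{thm:lmtd}.
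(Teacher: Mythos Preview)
Your proposal is correct and follows essentially the same route as the paper: split $\dc$ via the triangle inequality, bound the dynamics-error piece by $\sqrt{\maxeigc{\Mc}}\,\bardx$ via $\Vert R_c\Vert\le\sqrt{\maxeigc{\Mc}}$, and bound the feedback piece by taking the supremum over $s$ to obtain $\Delta k(\xhat,x,x^*,u^*)$, then using $\Delta k(x,x,x^*,u^*)=0$ together with the Lipschitz property \eqref{eq:lip} to get $L_{\Delta k}\diste$. Your justification of $\Delta k(x,x,x^*,u^*)=0$ via the minimum-norm structure of $u$ and $u_\textrm{closest}$ is in fact slightly more explicit than the paper's, which simply asserts it.
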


\begin{proof}
Recall the form of $\dc$ from \eqref{eq:dc}. First consider the third term in \eqref{eq:dc} (the dynamics error). By using $\Vert\dx\Vert \le \bardx$ and $\Vert R_c(\cdot) \Vert \le \sqrt{\maxeigc{\Mc}}$, we can pull $\dx$ and $R_c(\cdot)$ out of the integral to form the first term in the lemma statement. The second term follows by performing a max over geodesic parameters $s \in [0,1]$ and substituting the appropriate arguments into \eqref{eq:lip}. Specifically, we have the following chain of relations:
\begin{equation*}
	\begin{array}{ll}
	\int_0^1 \Vert R_c(\gamma_c^t(s)) B(u(\xhat, x^*, u^*) - u_\textrm{closest})\Vert ds \\ 
	\quad \quad \le \int_0^1 \max_{s\in[0,1]} \Vert R_c(\gamma_c^t(s)) B(u(\xhat, x^*, u^*) - u_\textrm{closest})\Vert ds \\
	\quad \quad = \int_0^1 \Delta k(\xhat, x, x^*, u^*) ds = \Delta k(\xhat, x, x^*, u^*)\\
	\quad \quad = |\Delta k(\xhat, x, x^*, u^*) - \Delta k(x, x, x^*, u^*)| \le L_{\Delta k} \diste(\xhat, x) = L_{\Delta k} \diste
\end{array}
\end{equation*}
The first inequality holds via upper-bounding the norm over the geodesic parameters, the first equality holds by substituting the definition of $\Delta k$, the second equality holds as $\Delta k$ is not a function of $s$, the third equality holds since $\Delta_k(x,x,x^*,u^*) = 0$, and the final inequality holds from the definition of our Lipschitz constant.

\end{proof}

\begin{lemma}[$\dot d_e(t)$]
Let $\bar\sigma(B_y)$ denote the maximum singular value of $B_y$. For constant $\rho$ and $\Mo$, the integral in \eqref{eq:re} simplifies to $\Vert R_e \dq(t) \Vert$ and can be bounded as:
	\begin{equation}\label{eq:lem2_app}\small
		\Vert R_e \dq(t) \Vert \le \sqrt{\maxeig{\Wo}}\bardx + \textstyle\frac{1}{2}\rho\maxeig{\Mo}^{1/2}\big(L_{\hinv}\sqrt{\maxsing{B_y}}\bardy + \bar\err_{\{1,2,3\}}(x^*,\theta)\big)
	\end{equation}
\end{lemma}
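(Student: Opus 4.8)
The plan is to treat the effective disturbance $\dq$ on the virtual system \eqref{eq:nominal_xhat} via the decomposition already established in the main text, $\dq(t) = B_w(t)\dx(t) + \de(t)$, and bound $\Vert R_e \dq(t)\Vert$ by the triangle inequality into a dynamics contribution and a perception/measurement contribution. First I would note that since $\rho$ and $\Mo$ are constant, $\Wo = R_e^\top R_e$ and hence $R_e$ are constant, so the geodesic integral $\int_0^1 \Vert R_e(\gamma_e^t(s))\dq(t)\Vert ds$ in \eqref{eq:re} pulls $R_e$ out and collapses to $\Vert R_e\dq(t)\Vert$ (using $\int_0^1 ds = 1$). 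Then $\Vert R_e\dq\Vert \le \Vert R_e B_w\dx\Vert + \Vert R_e \de\Vert$, and I would bound the two pieces separately.

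For the dynamics term I would use submultiplicativity together with $\Vert R_e\Vert = \maxsing{R_e} = \sqrt{\maxeig{\Wo}}$ (since $R_e^\top R_e = \Wo$), the standing assumption $\Vert B_w(t)\Vert \le 1$, and $\Vert\dx\Vert \le \bardx$, giving $\Vert R_e B_w\dx\Vert \le \sqrt{\maxeig{\Wo}}\,\bardx$, which is exactly the first term of \eqref{eq:lem2_app}. For the perception/measurement term I would substitute the definition of $\de$ from \eqref{eq:de}, writing $\Vert R_e\de\Vert = \tfrac{1}{2}\rho\,\Vert R_e\Mo\Cred^\top\big(\hinv(\h(x,\theta)+B_y\dy,\theta)-\Cred x\big)\Vert$, and isolate the matrix factor $R_e\Mo\Cred^\top$.

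The key algebraic step, and the one that yields the $\maxeig{\Mo}^{1/2}$ coefficient, is the identity $R_e\Mo = R_e(R_e^\top R_e)^{-1} = R_e^{-\top}$. Consequently $\Vert R_e\Mo\Cred^\top\Vert \le \Vert R_e^{-\top}\Vert\,\Vert\Cred\Vert = \sqrt{\maxeig{\Mo}}$, using $\Vert R_e^{-\top}\Vert = 1/\sqrt{\mineig{\Wo}} = \sqrt{\maxeig{\Mo}}$ and the fact that $\Cred$ is a boolean row-selection matrix, so $\Vert\Cred\Vert = 1$. I would then invoke \eqref{eq:dq_derivation} to split $\Vert\hinv(\h(x,\theta)+B_y\dy,\theta)-\Cred x\Vert \le L_{\hinv}\Vert B_y\dy\Vert + \err(x,\theta)$, bound the measurement noise by $\Vert B_y\dy\Vert \le \maxsing{B_y}\,\bardy$ (matching the $\sqrt{\maxsing{B_y}}\bardy$ form used in the statement), and finally replace $\err(x,\theta)$ by any of the valid overestimates $\bar\err_{\{1,2,3\}}(x^*,\theta)$ from App.~\ref{sec:app_bounds}. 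Assembling the two contributions reproduces \eqref{eq:lem2_app}.

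I expect the main obstacle to be the matrix-norm step rather than the disturbance splitting: one must recognize the simplification $R_e\Mo\Cred^\top = R_e^{-\top}\Cred^\top$ and justify $\Vert\Cred\Vert=1$ cleanly, so that the two metric-dependent factors appear as $\sqrt{\maxeig{\Wo}}$ (on the dynamics term) and $\sqrt{\maxeig{\Mo}}$ (on the perception term) with no cross terms. Everything else — the triangle inequality, submultiplicativity, the constant-metric reduction of the integral, and substituting the perception bound from \eqref{eq:dq_derivation} — is routine once that factorization is in place.
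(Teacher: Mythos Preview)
Your proposal is correct and follows essentially the same route as the paper: triangle-inequality split $\dq = B_w\dx + \de$, the dynamics term bounded via $\Vert R_e\Vert = \sqrt{\maxeig{\Wo}}$, and the perception term handled via the Cholesky identity $R_e\Mo = R_e^{-\top}$ together with $\Vert R_e^{-\top}\Vert = \sqrt{\maxeig{\Mo}}$, $\Vert\Cred\Vert \le 1$, and the decomposition \eqref{eq:dq_derivation}. The paper's proof is identical in structure and in the key algebraic simplification you singled out.
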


\begin{proof}
	The first term of the bound, $\sqrt{\maxeig{\Wo}}\bardx$, follows the same logic from Lemma \ref{lem:dc}, except this time $\Vert R_e(\cdot) \Vert \le \sqrt{\maxeig{\Wo}}$. The second term follows from first combining the final line of \eqref{eq:dq_derivation} with one of the three error bounds $\bar\err_{\{1,2,3\}}$, and substituting that combination into the integrand of \eqref{eq:re} (which here simplifies to $\Vert R_e \dq(t) \Vert$). Specifically, we have the following:
	\begin{equation*}
		\begin{array}{ll}
			\Vert R_e \frac{1}{2}\rho \Mo \Cred^\top (\hinv(y, \theta) - \Cred x ) \Vert &= \frac{\rho}{2} \Vert R_e \Mo \Cred^\top (\hinv(y, \theta) - \Cred x ) \Vert \\
			&=\frac{\rho}{2} \Vert R_e^{-\top } \Cred^\top (\hinv(y, \theta) - \Cred x ) \Vert \\
			&\le \frac{\rho}{2} \Vert R_e^{-\top} \Cred^\top \Vert \Vert \hinv(y, \theta) - \Cred x  \Vert \\
			&\le \frac{\rho}{2}\maxeig{\Mo}^{1/2}\Big(L_{\hinv}\sqrt{\maxsing{B_y}} \bardy + \bar\err_{\{1,2,3\}}(x,\theta)\Big)
		\end{array}
	\end{equation*}
	
	The simplification in the second equality is done via properties of the Cholesky decomposition: i.e., $R_e\Mo = R_e(R_e^\top R_e)^{-1} = R_e R_e^{-1} R_e^{-\top} = R_e^{-\top}$. The final inequality follows from $\Vert R_e^{-\top}\Vert \le \sqrt{\maxeig{\Mo}}$, $\Vert \Cred \Vert \le 1$, and applying the triangle inequality.
\end{proof}

\begin{theorem}[From derivative to value]
Let RHS denote the right hand side of \eqref{eq:vector_inequality}. Given bounds on the Riemannian distances at $t=0$: $\distc(0) \le \bar d_c(0)$ and $\diste(0) \le \bar d_e(0)$, upper bounds $\bar\distc(t) \ge \distc(t)$ and $\bar\diste(t) \ge \diste(t)$ for all $t \in [0, T]$ can be written as
	\begin{equation*}\small
		\begin{bmatrix}\distc(t) \\ \diste(t) \end{bmatrix} \le \int_{\tau=0}^{t} \textrm{RHS}\Big(\tau,\begin{bmatrix}\distc \\ \diste \end{bmatrix}\Big) d\tau \doteq \begin{bmatrix}\bar\distc(t) \\ \bar\diste(t)\end{bmatrix}, \quad \distc(0) = \bar d_c(0),\ \diste(0) = \bar d_e(0).
	\end{equation*}
\end{theorem}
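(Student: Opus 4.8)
The statement to establish is a vector-valued comparison result: the coupled affine differential inequality \eqref{eq:vector_inequality} on $(\distc,\diste)$ implies that the solution of the corresponding \emph{equality} system, integrated from the initial bounds, dominates $(\distc(t),\diste(t))$ componentwise. The plan is to recognize this as an instance of the comparison principle for \emph{cooperative} (quasimonotone) ODE systems -- the Kamke--M\"uller theorem -- rather than the scalar comparison lemma, which does not apply directly because the two inequalities are coupled through the off-diagonal terms.

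First I would assemble the vector differential inequality: by Lemmas \ref{lem:dc} and \ref{lem:de}, the integral terms in \eqref{eq:rc} and \eqref{eq:re} are bounded by the affine expressions appearing in \eqref{eq:vector_inequality}, so $v(t) \doteq (\distc(t),\diste(t))^\top$ satisfies $\dot v \le \mathrm{RHS}(t,v)$ componentwise, where $\mathrm{RHS}$ is affine in $v$. Next I would define the comparison trajectory $w(t) \doteq (\bar\distc(t),\bar\diste(t))^\top$ as the solution of the equality ODE $\dot w = \mathrm{RHS}(t,w)$ with $w(0) = (\bar d_c(0),\bar d_e(0))^\top$; since $\mathrm{RHS}$ is affine, hence globally Lipschitz in $w$, this solution exists and is unique on $[0,T]$, and by hypothesis $v(0) \le w(0)$.

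The crux is verifying the quasimonotone condition: each component $\mathrm{RHS}_i(t,x)$ must be nondecreasing in $x_j$ for $j \ne i$. Because $\mathrm{RHS}$ is affine, this reduces to checking that the two off-diagonal entries of the system matrix in \eqref{eq:vector_inequality} are nonnegative. The $(1,2)$ entry is $L_{\Delta k} \ge 0$, a Lipschitz constant. The $(2,1)$ entry $(*)$ is either $0$ (when using $\bar\err_1$ or $\bar\err_3$) or $0.5 L_p\rho\sqrt{\maxeig{\Mo}/\mineigc{\Mc}} \ge 0$ (when using $\bar\err_2$, where the $\distc$-proportional part of $\bar\err_2$ has been regrouped out of the disturbance vector and into the matrix as $\bar\err_{\tilde 2}$); both are nonnegative. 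With quasimonotonicity established and $v(0) \le w(0)$, the comparison principle for cooperative systems yields $v(t) \le w(t)$ componentwise for all $t \in [0,T]$, i.e., $\distc(t) \le \bar\distc(t)$ and $\diste(t) \le \bar\diste(t)$, as claimed.

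I expect the main obstacle to be justifying the vector comparison step rigorously, since the coupling rules out a term-by-term scalar argument: one cannot bound $\distc$ without already controlling $\diste$, and vice versa. The clean way around this is either to invoke a standard cooperative-systems comparison theorem directly, or to give a short self-contained perturbation argument -- comparing $v$ against the solution $w_\varepsilon$ of $\dot w_\varepsilon = \mathrm{RHS}(t,w_\varepsilon) + \varepsilon\mathbf{1}$ with $w_\varepsilon(0) = w(0)+\varepsilon\mathbf{1}$, and showing that the strict inequality $v(t) < w_\varepsilon(t)$ cannot first fail at any time (if some component touches, quasimonotonicity forces the comparison derivative to strictly exceed that of $v$, contradicting the touching), then letting $\varepsilon \to 0$ via continuous dependence. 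A secondary care point is the $\bar\err_2$ regrouping, which must be carried out consistently so that the resulting $\mathrm{RHS}$ is genuinely affine with the stated nonnegative $(*)$ coupling.
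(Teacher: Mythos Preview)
Your proposal is correct and follows essentially the same route as the paper: invoke a vector comparison principle for quasimonotone (cooperative) systems, verify quasimonotonicity by checking that the off-diagonal entries $L_{\Delta k}$ and $(*)$ of the affine system matrix are nonnegative (i.e., the matrix is Metzler), and conclude via existence of the comparison solution. The one technical point the paper adds that you do not mention is continuity of $\mathrm{RHS}$ in $t$: when $\bar\err_3(x^*(t),\theta)$ is used, the $\max$ over training points in $\Omega_c(t)$ can jump discontinuously as $t$ varies, and the paper handles this by replacing $\bar\err_3$ with a continuous anticipatory upper envelope before applying the comparison theorem.
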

\begin{proof}
	Using a vector-valued comparison theorem \cite[Corollary 1.7.1]{differential_book}, we have that $\distc(t)$ and $\diste(t)$ can be upper bounded on $[0,T]$ by the solution to the upper bound of \eqref{eq:vector_inequality}, i.e., $[\dot \distc, \dot \diste]^\top = $ RHS, provided that the solution to RHS exists until at least $t = T$. Further prerequisites for applying \cite[Corollary 1.7.1]{differential_book} are that RHS is quasi-monotone nondecreasing in $[\distc, \diste]^\top$ and that RHS is continuous in $t$ and $[\distc, \diste]^\top$. In the following, we show that these requirements hold when using $\bar\err_1$, $\bar\err_2$, and a modified, continuous version of $\bar\err_3$.
	
	First, we describe this modification to $\bar\err_3(x^*(t), \theta)$, which we note is not continuous in $t$, in general. This is because as $t$ changes, the max term in \eqref{eq:err_bnd_3} can discontinuously change, based on the datapoints that fall inside $\Omega_c(t)$. However, we can obtain a smooth upper bound to $\bar\err_3(x^*, \theta)$ by anticipating these discontinuous changes during planning (we can do this since we know the nominal dynamics and the dataset), and smoothing out these discontinuities, e.g., as in red in Fig. \ref{fig:continuous}.
	
\vspace{5pt}
\begin{figure}
\includegraphics[width=0.5\textwidth]{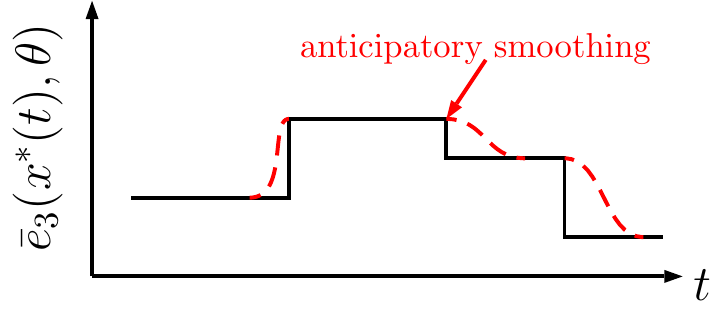}
\caption{An example of anticipatory smoothing: the red curve is a continuous upper bound to the potentially discontinuous $\bar\err_3(x^*,\theta)$.} \label{fig:continuous}
\end{figure}\vspace{5pt}
	
	We can see that RHS is continuous for all $[\distc, \diste]^\top$ (since it is just a linear system). Furthermore, it is continuous in $t$, since all terms in RHS are constant, apart from the error bounds $\bar\err_i$, which are continuous functions of $t$ (using smoothing for $\bar\err_3$). Moreover, using continuity of RHS over $t \in [0,T]$, the solution to RHS exists over $[0,T]$. A vector-valued function $g(t,r): \mathbb{R}\times\mathbb{R}^{N}\rightarrow \mathbb{R}^{N}$ is quasi-monotone nondecreasing in $r$ if $\frac{\partial g_i(t,r)}{\partial r_j} \ge 0$, for all $r$ and for all $j \ne i$, $i = 1,\ldots,N$. We can see that RHS precisely satisfies these conditions, as the matrix in \eqref{eq:vector_inequality} is Metzler, i.e., all of its off-diagonal components are nonnegative: $L_{\Delta k}, L_p, \mineigc{\Mc}, \maxeig{\Mo}, \rho \ge 0$. More precisely, $\frac{\partial \textrm{RHS}_1(t,[\distc, \diste]^\top)}{\partial \diste} = L_{\Delta k} \ge 0$ and $\frac{\partial \textrm{RHS}_2(t,[\distc, \diste]^\top)}{\partial \distc} = (*) \ge 0$.

\end{proof}

\begin{theorem}[\lmtd correctness]
		Assume that $L_{\Delta k}$, $L_{\hinv}$, and the estimated constants in $\bar\err_{\{1,2,3\}}$ are valid over their computed domains. Then Alg. \ref{alg:rrt} returns a trajectory $(x^*(t), u^*(t))$, which when tracked on the true system \eqref{eq:system} using $u(\xhat, x^*, u^*)$ with state estimates $\xhat$ generated by \eqref{eq:observer_reduced}, reaches $\mathcal{G}$ while satisfying $x(t) \in \Xsafe$, for all $t \in [0, T]$.
\end{theorem}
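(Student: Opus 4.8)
The plan is to show that the runtime checks in lines 8--10 of Alg.~\ref{alg:rrt} exactly enforce the hypotheses under which the tube bounds $\bar\distc(t)$, $\bar\diste(t)$ of Thm.~\ref{thm:upper_bound} are valid, and then to close the circular dependence between ``the trajectory stays in the tube'' and ``the tube bound is valid'' via a continuity (first-exit-time) argument. First I would assemble the chain of implications already established in the preceding results: whenever (i) the tracking geodesic $\gamma_c^t(s)\subseteq D_c$ and $x\in D_r$, (ii) the estimation geodesic $\gamma_e^t(s)\subseteq D_e$ and $\xhat\in D_c$ (so a feasible feedback control exists in \eqref{eq:ufeas}), and (iii) $\distc\le\bar c$, $\diste\le\bar e$ (so $L_{\Delta k}$ is valid), together with the assumed validity of $L_{\hinv}$ and the constants in $\bar\err_{\{1,2,3\}}$, then the differential inequalities \eqref{eq:rc}--\eqref{eq:re} hold with right-hand sides bounded by Lemmas~\ref{lem:dc} and~\ref{lem:de}. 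The comparison theorem (Thm.~\ref{thm:upper_bound}) then yields $\distc(t)\le\bar\distc(t)$ and $\diste(t)\le\bar\diste(t)$, i.e. $x(t)\in\Omega_c(t)$ and $\xhat(t)\in\Omega_c(t)\oplus(\Omega_e(t)\ominus\{x(t)\})$.

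Next I would run the bootstrapping argument along the returned plan, whose per-edge tubes are chained by passing the terminal bound of each edge as the initial bound of the next (lines 4, 8, 11), so that a single pair $\bar\distc(t),\bar\diste(t)$ is defined over all of $[0,T]$ with the given $\distc(0)\le\bar d_c(0)$, $\diste(0)\le\bar d_e(0)$. Let $T^\star$ be the first time at which conditions (i)--(iii) fail, and suppose for contradiction $T^\star\le T$. On $[0,T^\star)$ all conditions hold, so by the chain above $\distc\le\bar\distc$ and $\diste\le\bar\diste$ there, and by continuity these persist at $T^\star$. But line 9 guarantees $\Omega_c(T^\star)\subseteq D_c\cap D_r\cap\Xsafe$ and line 10 guarantees $\Omega_c(T^\star)\oplus(\Omega_e(T^\star)\ominus\{x(T^\star)\})\subseteq D_e\cap D_c$, so $x(T^\star)\in D_c\cap D_r$ and $\xhat(T^\star)\in D_c\cap D_e$; likewise line 8 gives $\distc(T^\star)\le\bar c$ and $\diste(T^\star)\le\bar e$. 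The geodesic-containment parts of (i)--(ii) follow because $\Omega_c(t)$, $\Omega_e(t)$ are metric balls centered at $x^*(t)$, $x(t)$, so the relevant geodesics lie inside them, and $\Omega_e(t)\subseteq\Omega_c(t)\oplus(\Omega_e(t)\ominus\{x(t)\})$ since $x(t)\in\Omega_c(t)$. Thus all of (i)--(iii) still hold at $T^\star$, contradicting its definition. Hence no such $T^\star\le T$ exists, the tube bounds are valid on all of $[0,T]$, and in particular $x(t)\in\Omega_c(t)\subseteq\Xsafe$ for all $t\in[0,T]$, establishing safety.

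Finally, goal reachability follows from the termination test: Alg.~\ref{alg:rrt} returns only when some $\Omega_c(T)\subseteq\mathcal{G}$ (line 13), and since the validity argument gives $x(T)\in\Omega_c(T)$, we conclude $x(T)\in\mathcal{G}$. The main obstacle I expect is the circular dependence itself --- the tube bound is only valid inside the trusted domain, yet remaining in the trusted domain is precisely what the tube is meant to certify --- and making the first-exit-time argument airtight requires care in re-expressing the geodesic-containment hypotheses of \eqref{eq:rc}--\eqref{eq:re} in terms of the sublevel-set tubes the planner actually checks, and in handling boundary cases where $\distc$ or $\diste$ meets its bound with equality. A secondary technical point is verifying that per-edge concatenation preserves the comparison-theorem hypotheses (continuity and quasimonotonicity of the RHS, including the smoothed $\bar\err_3$) across edge junctions.
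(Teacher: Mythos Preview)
Your proposal is correct and follows essentially the same approach as the paper: both verify that the checks in lines 8--10 of Alg.~\ref{alg:rrt} enforce the validity domains for each constant in \eqref{eq:vector_inequality} (the CCM/OCM rates and eigenvalues, $L_{\Delta k}$, $L_{\hinv}$, and the constants in $\bar\err_{\{1,2,3\}}$), and both invoke the fact that $\Omega_c$, $\Omega_e$ are Riemannian sublevel sets to obtain geodesic containment. Your first-exit-time argument to close the circular dependence between tube validity and domain membership is a welcome explicit treatment of a step the paper's proof leaves implicit.
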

\begin{proof}
	By construction, Alg. \ref{alg:rrt} returns a plan $(x^*(t), u^*(t))$ which ensures that $ \Omega_c(t) \cap \Xunsafe = \emptyset$, for all $t \in [0, T]$, where $\Omega_c(t) = \{x \mid \distc(x^*(t), x(t)) \le \bar\distc(t)\}$, and $x(T) \in \mathcal{G}$. Thus, $x(t) \cap \Xunsafe = \emptyset$, provided that $x(t) \in \Omega_c(t)$, for all $t \in [0,T]$, i.e., the tubes are valid. Using Theorem \ref{thm:upper_bound} and the assumption that all estimated constants are valid over their computed domains (from the theorem statement), the tubes are valid if $x(t)$ remains in the corresponding domains of validity for all constants in \eqref{eq:vector_inequality}. In the following, we exhaustively prove that the additional constraints introduced by Alg. \ref{alg:rrt} enforce this domain invariance.
		
	First, we list all constants in \eqref{eq:trk_bnd} with a non-trivial domain of validity.
	\begin{enumerate}
		\item CCM-related constants: the minimum and maximum eigenvalues of $\Mc$: $\mineigc{\Mc}$, $\maxeigc{\Mc}$, and the CCM contraction rate $\lambda_c$.
		\item OCM-related constants: the minimum and maximum eigenvalues of $\Mo$: $\mineig{\Mo}$, $\maxeig{\Mo}$, the OCM contraction rate $\lambda_e$, and the multiplier $\rho$.
		\item Constants estimated with probabilistic correctness guarantees using the Fisher-Tippett-Gnedenko (FTG) theorem \cite{cdc, lipschitz_ral, weng2018evaluating}: $L_{\Delta k}$, $L_{\hinv}$, and all constants involved in the perception error bounds $\bar\err_i(x^*,\theta)$, $i=1,2,3$: $\bar\err_1$, $L_p$, and $\disp$.
	\end{enumerate}
	
	Now, we prove why each group of constants is valid under the constraints of Alg. \ref{alg:rrt}:
	\begin{enumerate}
		\item Suppose $\Mc$ is constant; then, its maximum and minimum eigenvalues trivially extend for all of $\X$. If $\Mc$ is polynomial and is generated via the SoS program in \eqref{eq:ccm_opt}, its eigenvalues are by construction bounded by $1/\underline{\beta}$ and $1/\bar\beta$; these provide globally-valid eigenvalue bounds over $\X$. Moreover, the CCM-based controller contracts the nominal dynamics at rate $\lambda_c$ if 1) for all time $t \in [0,T]$, the geodesic connecting $x^*(t)$ and $x(t)$ (denoted as $\gamma_c^t(s), s \in [0, 1]$) is contained within the contraction domain $D_c$, and 2) there exists a feasible control input which achieves the $\lambda_c$ contraction rate. Alg. \ref{alg:rrt} enforces both of these conditions: 1) line 9 of Alg. \ref{alg:rrt} ensures $\Omega_c(t) \subseteq D_c$, since $\Omega_c$ is defined as a sublevel set with respect to the Riemannian metric induced by $\Mc$, i.e., $\gamma_c^t(s) \subseteq \Omega_c(t) \subseteq D_c$; 2) by \eqref{eq:con_ccm}, the feedback controller $u(\xhat, x^*, u^*)$ is always feasible (i.e., \eqref{eq:ufeas} is nonempty) as long as $\xhat \in D_c$; this is enforced by line 10 of Alg. \ref{alg:rrt}.
		\item In this work, we use constant $\Mo$, so its maximum and minimum eigenvalues trivially extend for all of $\X$. The OCM-based observer contracts at rate $\lambda_e$ for its nominal dynamics if for all time $t \in [0,T]$, the geodesic connecting $x(t)$ and $\xhat(t)$ (denoted as $\gamma_e^t(s), s \in [0, 1]$) is contained within the contraction domain $D_e$. In Alg. \ref{alg:rrt}, line 10 ensures $\Omega_e(t) \subseteq D_e$, since $\Omega_e$ is defined as a sublevel set with respect to the Riemannian metric induced by $\Wo$, $\gamma_e^t(s) \subseteq \Omega_e(t) \subseteq D_e$.
		\item From its definition in the paragraph before \eqref{eq:lip}, $L_{\Delta k}$ is valid if $\distc(t) \le \bar c$ and $\diste(t) \le \bar e$, and $x^* \in D_x$, $u^* \in \U$. We ensure this via the check in Alg. \ref{alg:rrt}, line 8. $L_{\hinv}$, and the error bound constants $\bar\err_1$, $L_p$, and $\disp$ are valid if $x \in D_r$; this is checked in line 9 of Alg. \ref{alg:rrt}.
	\end{enumerate}
		
\end{proof}

\begin{remark}[Overall correctness probability]\label{rem:overall_correctness}
	If all FTG-estimated constants are obtained according to Rem. \ref{rem:lipschitz_estimation} with probability $\rho$ of correctness, using samples which are mutually independent, then the overall probability of the correctness of \lmtd can be bounded by the product of each constant being estimated correctly; for example, if $L_{\Delta k}$, $L_{\hinv}$, and $\bar\err_1$ are estimated, then \lmtd returns a safely-trackable trajectory with probability at least $\rho^3$.
\end{remark}

\section{Optimizing CCMs and OCMs for output feedback}\label{sec:app_method_ccms_and_ocms}

In this section, we describe in more detail our method for synthesizing optimized CCMs and OCMs (described briefly in Sec \ref{sec:method_ccms_and_ocms}. To implement the tracking feedback controller and observers needed at runtime, we need to obtain the CCMs and OCMs that define them. Two popular ways for synthesizing contraction metrics are convex optimization (SoS programming) \cite{manchester} and learning \cite{cdc}. While the optimization-based methods are more efficient and easier to analyze than the learning-based approaches, the learning-based methods can be applied to higher-dimensional, non-polynomial systems. Since we focus on (approximately) polynomial systems in the results, we obtain CCMs and OCMs via SoS programming; however, our method is agnostic to how the CCMs/OCMs are generated, as long as all the associated constants (e.g., $\maxeige{\Mo}, \maxeigc{\Mc}$, etc.) can be accurately bounded. 

To minimize conservativeness in planning (cf. Sec. \ref{sec:method_ofmp}), it is important that the contraction metrics induce small tubes. However, exactly optimizing the tubes in \eqref{eq:trk_bnd} is challenging, as they are coupled and depend on plan-dependent constants. For tractability, we optimize a surrogate objective and design the CCM and OCM separately.

Denote $-G_c(x)$ as the LHS of \eqref{eq:strong_con}, $v$ as a vector of indeterminates of compatible dimension with $G_c(x)$, and $\bar\beta \ge \underline\beta > 0$. Let $\I_n$ be the $n\times n$ identity matrix. For the CCM, we minimize the steady-state tracking tube width for a uniform disturbance bound as in \cite{sumeet_icra} by solving the following:
\begin{equation}\label{eq:ccm_opt}
	\begin{array}{cl}
		\underset{\Wc(x), \bar\Wc, \lambda_c, \mu_i^c(x)}{\textrm{minimize}} &\quad \frac{1}{\lambda_c}\sqrt{\frac{\maxeigc{\Wc}}{\mineigc{\Wc}}} \\
		\textrm{subject to} &\quad \eqref{eq:strong_orth} \\
		&\quad v^\top G_c(x) v - \sum_i \mu_i^e(x) c_i(x) \in \textrm{SoS}\\
		&\quad \underline\beta \I_{n_x} \preceq \Wc(x) \preceq \bar\Wc \preceq \bar\beta \I_{n_x}
	\end{array}
\end{equation}
We motivate \eqref{eq:ccm_opt} in the following. For polynomial dynamics and CCMs, the CCM condition \eqref{eq:con_ccm} is equivalent to enforcing the nonnegativity of a polynomial function $p(x)$ over a domain. A sufficient condition for proving $p(x) \ge 0$ over a domain is to enforce that $p(x)$ can be written as a sum of squares; this can be enforced via a semidefinite constraint, which is convex and is referred to in \eqref{eq:ccm_opt} as $p(x) \in \textrm{SoS}$. As it may not be possible to enforce \eqref{eq:strong_con} globally for all $x \in \X$, we further introduce nonnegative Lagrange multipliers $\mu_i^e(x)$ to only enforce \eqref{eq:strong_con} over the subset $D_c \subseteq \X$. Specifically, we rewrite $x \in D_c \Leftrightarrow \bigwedge_{i=1}^{n_\textrm{con}} \{c_i(x) \le 0\}$, where $c_i(\cdot)$ are constraint functions that together define the boundary of $D_c$; then, for some $x$ where $c_i(x) > 0$, $\mu_i^c(x)$ can take some positive value to satisfy the SoS condition even if $v^\top G(x) v < 0$. We handle condition \eqref{eq:strong_orth} by restricting $\Wc(x)$ to be only a function of a subset of state variables (cf. \cite{sumeet_icra}), and we add additional constraints on the eigenvalues of $\Wc$ to ensure that its inverse is well-defined. As in \cite{sumeet_icra}, while \eqref{eq:ccm_opt} is not convex as written, it can be solved to global optimality by solving a sequence of convex SoS feasibility programs, where a subset of the decision variables are fixed at each iteration. Specifically, we perform a line search over $\lambda_c$ and a bisection search over the condition number $\maxeigc{\Wc}/\mineigc{\Wc}$.

We take a similar approach for optimizing OCMs. First, we found it sufficient to use constant OCMs and multipliers $\rho$ for the systems in this paper due to the simplifications enabled by the inverse perception map. This simplifies the condition \eqref{eq:con_ocm} to
\begin{equation}\label{eq:con_ocm_simple}
		\Wo A(\xhat) + A(\xhat)^\top \Wo - \rho\Cred^\top \Cred + 2 \lambda_e \Wo\le 0.
\end{equation}
Let $-G_e(x)$ be the LHS of \eqref{eq:con_ocm_simple}. We then obtain our OCM by solving the following:
\begin{equation}\label{eq:ocm_opt}
	\begin{array}{cl}
		\underset{\Wo, \lambda_o, \mu_i^e(x)}{\textrm{minimize}} &\quad \frac{1}{\lambda_c}\frac{\maxeigc{\Wc}}{\sqrt{\mineigc{\Wc}}}\rho \\
		\textrm{subject to} &\quad v^\top G_e(x) v - \sum_i \mu_i^e(x) c_i^e(x) \in \textrm{SoS}\\
		&\quad \underline\beta \I_{n_x} \preceq \Wo \preceq \bar\beta \I_{n_x}
	\end{array}
\end{equation}
We change the objective function here to account for the known components of the disturbance bound; specifically, from \eqref{eq:de} we know that any disturbance from the innovation term in the observer will be pre-multiplied by $\frac{1}{2}\rho\Mo$; this accounts for the extra factors in the objective. As before, we introduce Lagrange multipliers $\mu_e(x)$ to only enforce \eqref{eq:con_ocm_simple} over the subset $D_e \subseteq \X$, and restrict the eigenvalues of $\Wo$ to ensure its inverse is well-defined. To solve \eqref{eq:ocm_opt} approximately, we drop the extra square root factor, and do the same line search on $\lambda_e$ and bisection search on the condition number as for the CCM, but instead minimizing $\rho$ instead of solving a feasibility problem in each SoS program. We note that for both CCM and OCM synthesis, if the system is linear, i.e., $\dot x = Ax + Bu$, then $\Wc$ and $\Wo$ can be chosen as constant, simplifying both \eqref{eq:ccm_opt} and \eqref{eq:ocm_opt} to a standard semidefinite program (SDP), since all constraints which were previously polynomial functions of $x$ simplify to constant linear matrix inequalities (LMIs).

\section{System models}

In this appendix, we provide an overview of the system models that we use in the results Sec. \ref{sec:results}.

\subsubsection{4D nonholonomic car:}

\begin{equation}\label{eq:car}
	\begin{bmatrix}\dot{p}_x \\ \dot{p}_y \\ \dot\phi \\ \dot v\end{bmatrix} = \begin{bmatrix}v\cos(\phi) \\ v\sin(\phi) \\ 0 \\ 0\end{bmatrix} + \begin{bmatrix} 0 & 0 \\ 0 & 0 \\ 1 & 0 \\ 0 & 1 \end{bmatrix}\begin{bmatrix}\omega \\ a\end{bmatrix} + \begin{bmatrix} 0 & 0 \\ 0 & 0 \\ 1 & 0 \\ 0 & 1 \end{bmatrix}\dx
\end{equation}
where $p_x$ and $p_y$ are the $x$ and $y$ translations of the car, $\phi$ is its orientation, and $v$ is its linear velocity. where $u=[\omega,a]^\top$. To make \eqref{eq:car} compatible with SoS programming, when searching for the CCM and OCM, we polynomialize the dynamics by fitting a degree 5 polynomial to $\sin(\cdot)$ and $\cos(\cdot)$ over $[-\pi/2, \pi/2]$.

\subsubsection{6D planar quadrotor:}

We use the dynamics from \cite[p.20]{sumeet_icra} with six states and two inputs:
\begin{equation}\label{eq:pvtol}
	\begin{bmatrix}\dot{p}_x \\ \dot{p}_y \\ \dot{\phi} \\ \dot{v}_x \\ \dot{v}_z \\ \ddot{\phi} \end{bmatrix} = \begin{bmatrix} v_x \cos(\phi) - v_z \sin(\phi) \\ v_x \sin(\phi) + v_z \cos(\phi) \\ \dot\phi \\ v_z \dot\phi - g\sin(\phi) \\ -v_x \dot\phi - g\cos(\phi) \\ 0\end{bmatrix} + \begin{bmatrix}0 & 0 \\ 0 & 0 \\ 0 & 0 \\ 0 & 0 \\ 1/m & 1/m \\ l/J & -l/J\end{bmatrix}\begin{bmatrix}u_1 \\ u_2\end{bmatrix} + \begin{bmatrix}0 & 0 \\ 0 & 0 \\ 0 & 0 \\ 0 & 0 \\ 1 & 0 \\ 0 & 1\end{bmatrix}\dx,
\end{equation}
where $x = [p_x, p_z, \phi, v_x, v_z, \dot\phi]$ models the linear/angular position and velocity, and $u=[u_1,u_2]$ models thrust. We use the parameters $m = 0.486$, $l = 0.25$, and $J = 0.07$. To make \eqref{eq:pvtol} compatible with SoS programming, when searching for the CCM and OCM, we polynomialize the dynamics by fitting a degree 5 polynomial to $\sin(\cdot)$ and $\cos(\cdot)$ over $[-\pi/2, \pi/2]$.

\subsubsection{17D manipulation task:}

Let the $m\times n$ zero matrix be denoted $\mathbf{0}_{m\times n}$. We define $\bm{\phi} = [\phi_1, \phi_2, \phi_3]^\top$, $\mathbf{j} = [j_1, j_2, \ldots, j_7]^\top$, and $\mathbf{\dot j} = [\dot j_1, \dot j_2, \ldots, \dot j_7]^\top$. We model the full 17D system as:
\begin{equation}\label{eq:arm_17d}
	\begin{bmatrix}
		\bm{\dot\phi} \\
		\mathbf{\dot j} \\
		\mathbf{\ddot j}
	\end{bmatrix} = \begin{bmatrix}
		\mathbf{0}_{3\times 3} & \mathbf{0}_{3\times 7} & \mathbf{0}_{3\times 7} \\
		\mathbf{0}_{7 \times 3} & \mathbf{0}_{7 \times 7} & \I_{7 \times 7} \\
		\mathbf{0}_{7 \times 3} & \mathbf{0}_{7 \times 7} & \mathbf{0}_{7 \times 7}
	\end{bmatrix}\begin{bmatrix}
		\bm{\phi} \\
		\mathbf{j} \\
		\mathbf{\dot j}
	\end{bmatrix} + \begin{bmatrix}
		\mathbf{0}_{3\times 7} \\
		\mathbf{0}_{7\times 7} \\
		\I_{7 \times 7}
	\end{bmatrix}u + \begin{bmatrix}
		\mathbf{0}_{3\times 7} \\
		\mathbf{0}_{7\times 7} \\
		\I_{7 \times 7}
	\end{bmatrix}\dx
\end{equation}

Here, the $\phi_i$ are the Euler angles of the object relative to the end effector, the $j_i$ are the Kuka joint angles, the $\dot j_i$ are the Kuka joint velocities, and $u$ are the commanded joint accelerations.

The 14D subsystem referred to in the text is:
\begin{equation}\label{eq:arm_14d}
	\begin{bmatrix}
		\mathbf{\dot j} \\
		\mathbf{\ddot j}
	\end{bmatrix} = \begin{bmatrix}
		\mathbf{0}_{7 \times 7} & \I_{7 \times 7} \\
		\mathbf{0}_{7 \times 7} & \mathbf{0}_{7 \times 7}
	\end{bmatrix}\begin{bmatrix}
		\mathbf{j} \\
		\mathbf{\dot j}
	\end{bmatrix} + \begin{bmatrix}
		\mathbf{0}_{7\times 7} \\
		\I_{7 \times 7}
	\end{bmatrix}u
\end{equation}

\end{document}